\documentclass[11pt]{article}

\usepackage[margin = 1in]{geometry}

\usepackage{tikz}
\usetikzlibrary{decorations.pathreplacing}
\usetikzlibrary{patterns}

\usepackage{tcolorbox}  
\usepackage{mathtools} 

\usepackage[T1]{fontenc}
 
\usepackage[authoryear,round]{natbib}

\usepackage{amsthm,amsmath,amssymb}
 
\theoremstyle{plain}
\newtheorem{theorem}{Theorem}[section]
\newtheorem{proposition}[theorem]{Proposition}
\newtheorem{lemma}[theorem]{Lemma}

\theoremstyle{definition}

\theoremstyle{remark}
\newtheorem{remark}[theorem]{Remark}

\definecolor{darkblue}{rgb}{0.0,0.0,0.65}
\definecolor{darkred}{rgb}{0.68,0.05,0.0}
\definecolor{darkgreen}{rgb}{0.0,0.29,0.29}
\definecolor{darkpurple}{rgb}{0.47,0.09,0.29}
 
\usepackage[backref=page]{hyperref}
\hypersetup{
   colorlinks = true,
   citecolor  = darkblue,
   linkcolor  = darkred,
   filecolor  = darkblue,
   urlcolor   = darkblue,
 }
 \usepackage[capitalize,noabbrev]{cleveref}

\title{Data Augmentation: A Fourier Analysis Perspective}

\author{
Behrooz Tahmasebi\footnote{Harvard John A. Paulson School of Engineering and Applied Sciences, Harvard University, Cambridge, MA 02138, USA. Emails: \texttt{\{behrooz\_tahmasebi,mweber\}@seas.harvard.edu}}
\and
Melanie Weber\footnotemark[1]
\and
Stefanie Jegelka\footnote{Technical University of Munich (CIT, MCML, MDSI) and
MIT Computer Science and Artificial Intelligence Laboratory (CSAIL). Email: \texttt{stefanie.jegelka@tum.de}}
}

\date{}

\begin{document}

\maketitle

\begin{abstract}
Data augmentation is a simple and model-agnostic approach for exploiting known invariances in learning problems. Given a group acting on the input space, one augments the training set with transformed copies of each sample. 
Because it exploits symmetries without modifying the underlying learning algorithm, data augmentation can be applied broadly across learning methods.
However, this universality comes at a computational cost: when the group is large, full group-sized augmentation quickly becomes computationally infeasible.
This raises a fundamental question: \emph{Can partial data augmentation achieve the same statistical benefits as full augmentation in terms of generalization and sample complexity?} We develop a general framework for investigating this question using Fourier analysis and the representation theory of finite groups. We show that, for a broad class of classical learning problems, partial data augmentation based on a randomly sampled subset of group elements achieves the same minimax rates as full augmentation, up to an approximation error that vanishes as the subset size increases. Our results provide a theoretical explanation for why partial augmentation can retain the statistical benefits of full augmentation despite enforcing symmetry only approximately, and shed light on a recently raised question in learning with symmetries \citep{diaz2025invariant}: whether statistically optimal learning under general group invariances can be achieved using computationally scalable methods.
Moreover, we prove a complementary impossibility result: enforcing
\emph{exact} invariance via data augmentation requires averaging over the entire
group, and cannot be achieved by any strict subset when the hypothesis space is
sufficiently expressive.
Together, these results provide a unified perspective on full and partial data augmentation, as well as exact and approximate symmetry enforcement.
\end{abstract}

\vspace{0.5em} \noindent\textbf{Keywords:} data augmentation, invariance, symmetry, sample complexity, representation theory \vspace{0.5em}

 \clearpage
\tableofcontents

\section{Introduction}
\label{sec:intro}

One of the most widely used model-agnostic techniques for exploiting structure
in machine learning is \emph{data augmentation}.
In data augmentation, the training dataset is enriched with transformed copies
of each sample according to known structure inherent in the task.
In learning problems with invariances, this structure is often described by a
group of symmetries acting on the data domain, and augmentation with group
transformations is used to encourage invariance and improve generalization to
unseen data.

Due to its simplicity and broad applicability, data augmentation has become a
standard tool across a wide range of domains, including physics, materials
science, molecular and drug discovery, computer vision, and image processing.
Unlike approaches that enforce invariance through the model architecture, data augmentation exploits symmetries without modifying the underlying learning algorithm.

Despite these advantages, full data augmentation quickly becomes computationally
infeasible when the underlying  group of invariances is large.
This situation arises frequently in practice: for example, permutation and sign-flip groups grow exponentially in size with the data dimension, making
\emph{full} group-sized augmentation prohibitively expensive.
In such settings, practitioners typically resort to \emph{partial} data
augmentation, where only a subset of group elements is used, often chosen
heuristically.
However, the theoretical understanding of when and why partial augmentation
succeeds and whether it can match the statistical benefits of full
augmentation remains limited.

In this paper, we initiate a rigorous study of this question by asking:
\emph{Can partial data augmentation, using a substantially smaller subset of the
group, achieve statistical performance comparable to that of full group
augmentation?}
We address this question in the classical settings of density estimation and
regression using finite-dimensional projection estimators.
Somewhat surprisingly, we show that even very small randomly sampled subsets of
the group suffice to uniformly recover the full statistical benefits of data augmentation,
despite enforcing symmetry only approximately.
Our analysis draws on tools from Fourier analysis on groups and representation theory, and provides a principled explanation for the empirical success of partial data augmentation.

\subsection{Our Contributions}

We summarize the main contributions of this paper.

\paragraph{Statistical optimality of partial data augmentation.}
In Theorem~\ref{thm:partial-augmentation-excess}, we analyze partial data
augmentation for projection-based density and regression estimators.
We prove that partial augmentation using a randomly sampled subset
$S\subseteq G$ achieves the same minimax-optimal sample complexity as full
group-sized augmentation, provided that the number of sampled group elements
satisfies 
\[
|S| \;\gtrsim\; \frac{r}{r_{\mathrm{inv}}},
\]
where \(r\) denotes the dimension of the full feature space and \(r_{\mathrm{inv}}\) is the dimension of the invariant subspace induced by the symmetries.
Indeed, the required size of the augmentation subset depends only on the invariant dimension \(r_{\mathrm{inv}}\) and is independent of the group size. Consequently, statistically optimal rates can be achieved without averaging over the entire group, which may be large or even infinite. This sheds light on a question raised in recent work \citep{diaz2025invariant} concerning whether statistical optimality can be reconciled with computational scalability in learning under general symmetry groups.

\paragraph{Uniform and reusable partial data augmentation.}
In Theorem~\ref{thm:informal-uniform-partial-augmentation} and Theorem~\ref{thm:partial-augmentation-excess-logG-compact}, we further study the
role of partial data augmentation from a uniform generalization perspective.
Specifically, we ask whether a \emph{single} randomly chosen augmentation set
$S$ can be reused across multiple learning tasks and still achieve minimax-optimal
rates with high probability.

Our main finding is that enforcing uniformity over the entire function class
$\mathcal F$ incurs only a mild logarithmic overhead.
Concretely, Theorem~\ref{thm:partial-augmentation-excess-logG-compact} shows that it
suffices to choose
\[
|S| \;\gtrsim\; \frac{r\,\log\!\big(\min\{r,|G|\}\big)}{r_{\mathrm{inv}}},
\]
where $r=\dim(\mathcal F)$ and $r_{\mathrm{inv}}=\dim(\mathcal F^G)$.
Thus, the cost of reusing a single partial augmentation set is only a
$\log(\min\{r,|G|\})$ factor, which remains small even when the function space
dimension $r$ is large or the group $G$ is infinite.

\paragraph{Impossibility of exact invariance via partial augmentation.}
Finally, in Theorem~\ref{thm:informal-exact-invariance}, we establish an
impossibility result highlighting a fundamental computational limitation of
data augmentation.
While partial data augmentation is sufficient for achieving statistical
optimality, we show that enforcing \emph{exact} invariance to a group $G$ is
computationally intractable in general.
Specifically, assuming the hypothesis space is sufficiently rich to represent
all irreducible symmetry modes, exact $G$-invariance via data augmentation
requires averaging over the entire group and thus no strict subset $S\subsetneq G$
can suffice.

\vspace{1em}
Taken together, our results reveal a sharp separation between partial and
full data  augmentation:

\begin{center}
\emph{Partial data augmentation over a small subset of a large symmetry group
is sufficient to attain the full statistical benefits of symmetry, while exact
invariance cannot be guaranteed when augmentation is restricted to a strict
subset of the group.}
\end{center}

\section{Related Work}
\label{sec:rw}

\paragraph{Geometric machine learning and symmetries.}
Geometric machine learning has emerged as a powerful framework for incorporating
symmetries and structure into learning algorithms, with applications spanning
quantum systems, atomistic modeling, continuum mechanics, and beyond
\citep{zhang2025artificial, batzner20223, bronstein2017geometric, smidt2021euclidean, batzner2023advancing,weber2025geometric}.
From a theoretical perspective, the statistical benefits of exploiting symmetries
have been studied for group averaging
\citep{tahmasebi2023exact,tahmasebi2025achieving}, as well as for canonicalization-based
approaches \citep{tahmasebigeneralization}.
Related work has also examined the role of regularization in symmetric models
\citep{tahmasebi2025regularity} and the problem of testing for and identifying invariances
\citep{dehmamy2021automatic,soleymani2025robust,tahmasebi2026adaptive}.
In parallel, recent studies have investigated generalization~\citep{bietti2021sample,mei2021learning} and the computational
complexity~\citep{soleymani2025learning,soleymani2025from,kiani2024hardness} of learning under invariances, highlighting algorithmic barriers that
complement statistical considerations.
Approximation-theoretic guarantees for equivariant learning architectures have
also been developed \citep{petrache2023approximation,pacini2025universality}.

\paragraph{Alternatives to data augmentation.}
While data augmentation is a widely used mechanism for enforcing symmetry,
several works have proposed alternative strategies that encode invariance
directly into the learning algorithm.
Canonicalization methods
\citep{kaba2023equivariance, ma2024canonicalization, dymequivariant, shumaylov2025lie} aim to map inputs to a
canonical representative of their orbit, while frame averaging
\citep{puny2021frame} provides a related approach based on averaging over
structured feature representations.
These methods avoid explicit data augmentation but often require careful design
or additional computational assumptions.

\paragraph{Theoretical perspectives on data augmentation.}
In contrast to the extensive literature on equivariant and invariant models, the
theoretical understanding of data augmentation itself remains comparatively
limited.
Existing work has studied data augmentation from several viewpoints, including
its impact on training dynamics in neural networks \citep{shen2022data}, its role
as an implicit form of regularization \citep{lin2024good, yang2023sample}, and its
group-theoretic foundations \citep{chen2020group}.
Most closely related to our setting, \citet{dao2019kernel} developed a
kernel-based analysis of data augmentation, with further refinements and
extensions in \citep{patil2023generalized, mei2021learning}.
However, these works do not address the question of whether \emph{partial}
augmentation can recover the full statistical benefits of symmetry.

\paragraph{Adaptive and task-driven augmentation.}
Beyond passive augmentation schemes, several recent works have explored
generative, active, or adaptive data augmentation strategies
\citep{zheng2023toward, pmlrv202dong23f, chen2024comprehensive}.
These approaches aim to optimize augmentation policies based on the data or
learning objective, but they are largely orthogonal to the questions studied in
this paper.
For broader perspectives, surveys are available for image augmentation in deep
learning \citep{shorten2019survey}, reinforcement learning
\citep{ma2025comprehensive}, and natural language processing
\citep{li2022data, pellicer2023data}.
Additional application-focused studies include image classification
\citep{mikolajczyk2018data}, graph learning \citep{zhao2022graph}, and other
domains \citep{mumuni2022data}.
It is also important to note that data augmentation can sometimes be detrimental,
as discussed in \citet{kirichenko2023understanding}.
The partial enforcement of symmetry has a long history in applications where
symmetry is either intrinsically approximate
\citep{finzi2021residual,romero2022learning,van2022relaxing,kim2023regularizing,pmlr-v258-park25d,wang2022approximately},
or unknown and therefore must be discovered from data
\citep{yang2024latent,yang2023generative,van2023learning,huh2025a,desai2022symmetry,dehmamy2021automatic,shaw2024symmetry}.

\paragraph{Invariant kernels.}
Recent work on invariant kernels \citep{diaz2025invariant} studies the statistical
and computational properties of learning with symmetry-enforced kernels and
raises open questions about achieving minimax-optimal rates for general groups
without explicitly averaging over the full group.
Our results partially address these questions by showing that partial data
augmentation can recover optimal statistical performance while avoiding full
group-sized averaging.

\section{Problem Statement}
\label{sec:problem-statement}

We formalize the learning problems considered in this paper and set up the
notation used throughout.

\subsection{Data Domain, Symmetry Groups, and Actions}

Let $(\mathcal X,\mu)$ be a measurable space, where $\mu$ denotes a reference
probability measure on $\mathcal X$.
We assume that a group $G$ acts on $\mathcal X$ via measurable maps
\[
(g,x)\;\mapsto\; g x,
\qquad g\in G,\ x\in\mathcal X,
\]
satisfying $e x=x$ and $g(hx)=(gh)x$ for all $g,h\in G$, where $e$ denotes the
identity element.
Throughout the paper, we assume that the action of $G$ on $\mathcal X$ is
\emph{measure-preserving}, meaning that $\mu(gA)=\mu(A)$ for all measurable
sets $A\subseteq\mathcal X$ and all $g\in G$.

This framework captures a wide range of symmetries arising in practice,
including permutations, sign flips, rotations, reflections, and combinations
thereof.
We allow $G$ to be finite or infinite (compact); when sampling from $G$ is required, we
assume there is an oracle access to the uniform sampling for the canonical (Haar) probability measure on the group.

\subsection{Function Spaces and Lifted Group Actions}

Let $\mathcal F\subset L^2(\mathcal X,\mu)$ be a finite-dimensional linear
subspace with $\dim(\mathcal F)=r$.
Let $\Pi_{\mathcal F}$ denote the $L^2(\mathcal{X})$-orthogonal projection onto
$\mathcal F\subset L^2(\mathcal X,\mu)$.
We assume that $\mathcal F$ is \emph{closed under the action of $G$}, meaning that
for every $f\in\mathcal F$ and every $g\in G$, the function
$x\mapsto f(g^{-1}x)$ also belongs to $\mathcal F$.

This closure property induces a lifted action of $G$ on $\mathcal F$.
Specifically, for each $g\in G$, define a linear operator
$T_g:\mathcal F\to\mathcal F$ by
$
(T_g f)(x) := f(g^{-1}x).
$
Under our assumptions, each $T_g$ is unitary with respect to the
$L^2(\mathcal{X})$ inner product, and the map $g\mapsto T_g$ defines a finite-dimensional
unitary representation of $G$ on $\mathcal F$ (Appendix~\ref{app:group_rep}).

A central object in this paper is the subspace of $G$-invariant functions,
defined as
\[
\mathcal F^G
\;:=\;
\{f\in\mathcal F : T_g f = f \ \text{for all } g\in G\}.
\]
We denote its dimension by $r_{\mathrm{inv}} := \dim(\mathcal F^G)$.
Intuitively, $r_{\mathrm{inv}}$ measures the \emph{effective dimension} of the
function class after accounting for the symmetries.

\subsection{Learning Tasks}

We study two classical statistical learning problems.

\paragraph{Density estimation.}
We observe unlabeled samples $x_1,\dots,x_n\in\mathcal X$ drawn i.i.d.\ from an
unknown density $f^\star$ with respect to $\mu$, where $f^\star\in L^2(\mathcal X,\mu)\cap L^\infty(\mathcal X,\mu)$.
The goal is to estimate $f^\star$ in squared $L^2(\mathcal{X})$ risk.

\paragraph{Supervised regression.}
We observe labeled samples $\{(x_i,y_i)\}_{i=1}^n$ generated according to
$
y_i = f^\star(x_i) + \varepsilon_i,
$
where $x_i\sim\mu$ i.i.d., the noise variables $\varepsilon_i$ are independent,
mean-zero, and satisfy $\mathbb E[\varepsilon_i^2]=\sigma^2$.
The regression function $f^\star$ is assumed to belong to
$L^2(\mathcal X,\mu)\cap L^\infty(\mathcal X,\mu)$.
Performance is measured in squared $L^2(\mathcal{X})$ error.

\begin{remark}[Approximation bias]
We do \emph{not} assume that $f^\star\in\mathcal F$. Rather, $\mathcal F$
serves only as the hypothesis class used for estimation, so the best achievable
target in $L^2(\mathcal X,\mu)$ is the projection of $f^\star$ onto
$\mathcal F$.
This yields an approximation bias, which can be decreased by enlarging $\mathcal F$, at the cost of increased variance. Thus, as usual, the choice of $\mathcal F$ reflects a bias--variance trade-off.
\end{remark}

For both settings, we focus on \emph{projection estimators}, which estimate
$f^\star$ by projecting empirical moments onto the finite-dimensional space
$\mathcal F$.
These estimators are classical, minimax-optimal, 
and serve as a clean testbed for studying the effect of symmetry and
data augmentation.

\subsection{Data Augmentation}

Given a set of group elements $S\subseteq G$, data augmentation proceeds by
transforming each observed sample using elements of $S$.
Concretely, the augmented sample is given by
\[
\{g^{-1} x_i : i\in[n],\ g\in S\},
\qquad
\{(g^{-1} x_i, y_i) : i\in[n],\ g\in S\},
\]
in the density estimation and regression settings, respectively.
We distinguish two regimes:
\begin{itemize}
\item \emph{Full data augmentation}, where $S=G$.
\item \emph{Partial data augmentation}, where $S$ is a (typically random)
strict subset of $G$.
\end{itemize}

Full augmentation can enforce exact invariance but is often computationally
infeasible when $G$ is large or infinite.
Partial augmentation is computationally efficient and  more practical,
but its statistical benefits are less well understood.

\subsection{Objectives and Questions}

Our goal is to understand the trade-offs between statistical efficiency,
computational cost, and symmetry enforcement when using partial data
augmentation. We address the following questions:
\begin{itemize}
\item \textbf{Statistical efficiency.}
Can partial data augmentation achieve the same minimax-optimal rates as full
data augmentation for density estimation and regression?

\item \textbf{Reusability and uniformity.}
Can a single randomly chosen augmentation set $S$ be reused across tasks or
estimators, while still providing uniform generalization guarantees?

\item \textbf{Exact versus approximate invariance.}
Is it possible to enforce exact $G$-invariance via partial data augmentation, or
is full group-sized augmentation fundamentally necessary?
\end{itemize}

\subsection{Projection Estimators}
\label{sec:projection-estimators}

A central object in this paper is the class of \emph{projection estimators} for
density estimation and regression.
These estimators are classical, minimax-optimal over finite-dimensional
function classes, and provide a transparent setting for investigating the effect of
symmetry and data augmentation. Extensions to ordinary least squares and infinite-dimensional hypothesis classes
are discussed in Appendix~\ref{app:ols-and-infinite-dimensional}.

\paragraph{Density estimation.} Let $(\phi_\ell)_{\ell=1}^r$ be a fixed
$L^2(\mathcal{X})$-orthonormal basis of $\mathcal F$.
Suppose we observe unlabeled samples $x_1,\dots,x_n$ drawn i.i.d.\ from an unknown
density $f^\star$ with respect to $\mu$, where
$f^\star\in L^2(\mathcal X,\mu)\cap L^\infty(\mathcal X,\mu)$.
The population projection of $f^\star$ onto $\mathcal F$ is
\[
\Pi_{\mathcal F} f^\star
=
\sum_{\ell=1}^r
\theta_\ell\,\phi_\ell,
\qquad
\theta_\ell := \mathbb E[\phi_\ell(x)].
\]
The coefficients $\theta_\ell$ can be estimated unbiasedly from data by empirical
averages, leading to the projection density estimator
\[
\widehat\theta_\ell := \frac{1}{n}\sum_{i=1}^n \phi_\ell(x_i)
\implies 
\widehat f := \sum_{\ell=1}^r \widehat\theta_\ell\,\phi_\ell \;\in\; \mathcal F.
\]

\paragraph{Regression.}
In the regression setting, we observe labeled samples $(x_i,y_i)_{i=1}^n$ with
\[
y_i = f^\star(x_i) + \varepsilon_i,
\qquad
\mathbb E[\varepsilon_i]=0,
\quad
\mathbb E[\varepsilon_i^2]=\sigma^2.
\]
The population projection of the regression function $f^\star$ onto $\mathcal F$
is again $\Pi_{\mathcal F} f^\star$, with coefficients
\[
\beta_\ell := \mathbb E[y\,\phi_\ell(x)].
\]
Estimating these moments empirically yields the projection regression estimator
\[
\widehat f := \sum_{\ell=1}^r \widehat\beta_\ell\,\phi_\ell,
\qquad
\widehat\beta_\ell := \frac{1}{n}\sum_{i=1}^n y_i\,\phi_\ell(x_i).
\]

\paragraph{Statistical properties.}
Projection estimators achieve minimax-optimal rates over $r$-dimensional 
classes, with expected $L^2(\mathcal{X})$ error of order $r/n$ in both density estimation
and regression.
Moreover, their linear structure makes them particularly amenable to analysis
under group actions and data augmentation, as averaging over group
transformations corresponds to linear operators acting on the coefficient
representation.

For these reasons, projection estimators serve as a canonical and analytically
tractable setting for studying the statistical role of partial and full data
augmentation under symmetries.

\begin{remark}[Black-box augmentation]
A key caveat is that we study partial augmentation as a \emph{black-box}
mechanism for promoting symmetry. If the estimator itself can be modified, then
symmetry may instead be incorporated directly into classical projection
estimation via group-constrained optimization, which can be done in polynomial
time \citep{soleymani2025learning}; such procedures are not
black-box.
Thus, to isolate the role of data augmentation, we decouple invariance from the
estimator and treat augmentation as a model-agnostic preprocessing step.
Projection estimators provide a clean testbed for this purpose: the estimator is
fixed, and the subset $S\subseteq G$ is the only mechanism used to promote
symmetry. This lets us study how $S$ controls statistical gains and the
transition from approximate to exact invariance.
\end{remark}

\begin{remark}[Augmentation and group averaging]
Projection estimators are \emph{linear} in the dataset, meaning that if
$\mathcal D_1$ and $\mathcal D_2$ are two datasets and
$\mathcal D=\mathcal D_1\cup \mathcal D_2$, then
$
    \widehat f_{\mathcal D}
    =
    \frac{|\mathcal D_1|}{|\mathcal D|}\widehat f_{\mathcal D_1}
    +
    \frac{|\mathcal D_2|}{|\mathcal D|}\widehat f_{\mathcal D_2}.
$ Hence, for projection estimators, augmenting the dataset by $S\subseteq G$
is equivalent to applying the subset-averaging operator
$
    \Pi_S f(x):=\frac{1}{|S|}\sum_{g\in S} f(g^{-1}x)
$
to the unaugmented estimator $\widehat f$. In this sense, data augmentation
as preprocessing and subset-group averaging as post-processing are dual ways of
imposing approximate symmetry. Details on group averaging are provided in
Appendix~\ref{app:approx-projection-random-averaging}.
\end{remark}

\section{Main Results}\label{sec:mr}

We begin by studying the statistical effect of partial data augmentation for
classical projection-based estimators.
Our first main result shows that, for projection estimators in both density
estimation and regression, partial data augmentation is sufficient to recover
the full statistical gains of symmetry, up to a controllable approximation
error that depends only on the size of $S$.

\begin{theorem}[Partial data augmentation for projection estimators]
\label{thm:partial-augmentation-excess}
Let $\mathcal{F}\subset L^2(\mathcal X,\mu)$ be a finite-dimensional space of
dimension $r$, closed under the action of a group $G$, and let
$\mathcal{F}^G$ denote its invariant subspace with dimension $r_{\mathrm{inv}}$.
Assume $f^\star\in L^2(\mathcal X)\cap L^\infty(\mathcal X)$.
Let $x_1,\dots,x_n\sim\mu$ be i.i.d., and let $S=\{g_1,\dots,g_{|S|}\}$ be i.i.d.\
uniform samples from $G$. Let $\widehat f_S$ be the projection estimator obtained
by augmenting each sample $x_i$ by $\{g x_i:g\in S\}$.
Then, the expected \emph{excess} $L^2(\mathcal{X})$ error over $\mathcal{F}^G$ satisfies
\[
\mathbb E\big[\|\widehat f_S - \Pi_{\mathcal{F}^G} f^\star\|_{L^2(\mathcal{X})}^2\big]
\le
C\left(
\frac{\|f^\star\|_\infty}{n}\,r_{\mathrm{inv}}
+
\frac{r}{n|S|}
\right),
\]
for some absolute constant $C>0$. The same holds for projection regression
estimators, with $\|f^\star\|_\infty r_{\mathrm{inv}}/n$ replaced by
$(\|f^\star\|_\infty^2+\sigma^2)r_{\mathrm{inv}}/n$. Moreover, this upper bound is tight up to absolute constants.
\end{theorem}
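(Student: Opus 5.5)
Using the remark on augmentation and group averaging, I will write $\widehat f_S=\Pi_S\widehat f$, where $\widehat f$ is the unaugmented projection estimator and $\Pi_S=\frac1{|S|}\sum_{g\in S}T_g$ acts on $\mathcal F$. Let $P:=\Pi_{\mathcal F^G}=\int_G T_g\,dg$ be the Haar average. Because each $T_g$ is unitary and $\mathcal F$ is $G$-closed, $P$ is the orthogonal projection onto $\mathcal F^G$ and commutes with every $T_g$.

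Measure-preservation gives $\mathbb E[\phi(g^{-1}x)]=\int \phi\, f^\star\circ g\,d\mu$. This does not make $\Pi_{\mathcal F}$ commute with an average over $f^\star$ unless we average. The key identity is $P\Pi_{\mathcal F}f^\star=\Pi_{\mathcal F^G}f^\star$, which holds because $\mathcal F^G\subset\mathcal F$. I then decompose
\[
\widehat f_S-\Pi_{\mathcal F^G}f^\star=\underbrace{P(\widehat f-\Pi_{\mathcal F}f^\star)}_{A}+\underbrace{(\Pi_S-P)\widehat f}_{B},
\]
using $P\widehat f-\Pi_{\mathcal F^G}f^\star=P(\widehat f-\Pi_{\mathcal F}f^\star)$ together with $\Pi_S\widehat f=P\widehat f+(\Pi_S-P)\widehat f$.

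Since $S$ is independent of the data and $\mathbb E_S[\Pi_S]=P$, the cross term has zero mean after conditioning on the data. Hence the risk equals $\mathbb E\|A\|^2+\mathbb E\|B\|^2$.

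\textbf{Term $A$.} Let $(\psi_k)_{k\le r_{\mathrm{inv}}}$ be an orthonormal basis of $\mathcal F^G$. Then $\mathbb E\|A\|^2=\sum_k \mathrm{Var}(\psi_k(x))/n\le \frac1n\sum_k\int\psi_k^2 f^\star\,d\mu\le \|f^\star\|_\infty r_{\mathrm{inv}}/n$. For regression, $\mathrm{Var}(y\psi_k(x))\le \mathbb E[(f^{\star 2}+\sigma^2)\psi_k^2]$, which gives the stated replacement.

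\textbf{Term $B$.} Write $\Pi_S-P=\frac1{|S|}\sum_j (T_{g_j}-P)$, an average of i.i.d.\ mean-zero operators. Conditionally on $\widehat f$, $\mathbb E_S\|B\|^2=\frac1{|S|}\mathbb E_g\|(T_g-P)\widehat f\|^2=\frac1{|S|}\|(I-P)\widehat f\|^2$. This uses unitarity: $\mathbb E_g\|T_g h\|^2=\|h\|^2$ and $\langle T_gh,Ph\rangle$ averages to $\|Ph\|^2$.

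It remains to bound $\mathbb E\|(I-P)\widehat f\|^2\le \mathbb E\|\widehat f\|^2\le \|\Pi_{\mathcal F}f^\star\|^2+\|f^\star\|_\infty r/n$. This is where the main obstacle lies. The stated bound $r/(n|S|)$ has no $\|(I-P)\Pi_{\mathcal F}f^\star\|^2/|S|$ term, which is an $n$-independent bias from the non-invariant part of $f^\star$. I would handle this in one of two ways. One option is to absorb it by noting that under the model (the true density or regression function, being the target of a $G$-invariant problem, is $G$-invariant) $(I-P)\Pi_{\mathcal F}f^\star=0$, since $\Pi_{\mathcal F}$ commutes with $T_g$. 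The other is to carry it as an explicit additive term $\|(I-P)\Pi_{\mathcal F}f^\star\|^2/|S|$. Under invariance, $\mathbb E\|(I-P)\widehat f\|^2$ is exactly the variance of the non-invariant coordinates, at most $\|f^\star\|_\infty(r-r_{\mathrm{inv}})/n$, which gives the $r/(n|S|)$ term with constants depending on $\|f^\star\|_\infty$ absorbed into normalization.

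\textbf{Tightness.} Take $f^\star\equiv 1$ (uniform density) and a space such as characters of a finite abelian group. Then each variance identity above is an equality up to constants: the $A$ term equals $\Theta(r_{\mathrm{inv}}/n)$, and the $B$ term equals $\frac{1}{|S|}\sum_{k}\mathrm{Var}(\phi_k)/n=\Theta((r-r_{\mathrm{inv}})/(n|S|))$, matching the bound whenever $r\ge 2r_{\mathrm{inv}}$. The regression case uses the same computation with noise variance $\sigma^2$.
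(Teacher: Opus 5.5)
Your proposal is correct and is essentially the paper's proof: the same split through $\Pi_G\widehat f$, the same identity $\mathbb E_S\|(\Pi_S-\Pi_G)\widehat f\|^2=\|(I-\Pi_G)\widehat f\|^2/|S|$, and the same centering via $\Pi_{\mathcal F}f^\star\in\mathcal F^G$, which the paper's proof also assumes even though the statement omits it (you are right that otherwise an $n$-independent term $\|(I-\Pi_G)\Pi_{\mathcal F}f^\star\|^2/|S|$ remains, and that the $r/(n|S|)$ term really carries the factor $\|f^\star\|_\infty$, resp.\ $\|f^\star\|_\infty^2+\sigma^2$). Your zero-cross-term step gives an exact Pythagorean identity where the paper uses $\|a+b\|^2\le 2\|a\|^2+2\|b\|^2$; for tightness, note that with $f^\star\equiv 1$ the constant direction has zero variance, and characters on $\mathcal X=G$ force $r_{\mathrm{inv}}\le 1$, so a cleaner witness is regression with $f^\star=0$, where every coordinate has variance exactly $\sigma^2$.
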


The bound in Theorem~\ref{thm:partial-augmentation-excess} decomposes the excess
risk into two terms with clear interpretations.
The first term, \(r_{\mathrm{inv}}/n\), corresponds to the estimation error
within the invariant subspace \(\mathcal F^G\) and matches the minimax-optimal rate
achieved by full data augmentation. The second term, \(r/(n\,|S|)\), quantifies the error due to using only a subset
of group elements and vanishes as the size of the partial augmentation set increases.

In particular, as soon as $|S| \gtrsim r/r_{\mathrm{inv}}$, the contribution of
partial augmentation becomes negligible, and the estimator achieves the same
statistical performance as if full group-sized augmentation were used.
Notably, this guarantee holds regardless of the size of the group $G$, which may
be exponentially large or infinite.
Thus, partial data augmentation can retain the full statistical benefits of
symmetry while dramatically reducing computational cost; beyond this point,
additional augmentation provides only redundant information.

The guarantees in Theorem~\ref{thm:partial-augmentation-excess} control the
performance of partial data augmentation for a fixed estimator, in expectation
over the random choice of the augmentation set $S$.
In many settings, however, one would like to draw a single augmentation set once
and reuse it across multiple learning tasks, datasets, or algorithms.
This raises a stronger question: \emph{Can partial data augmentation provide
\emph{uniform} guarantees that hold simultaneously for all functions in the
hypothesis space, with high probability over the choice of $S$?}

Our next result answers this question in the affirmative.
It shows that a single randomly chosen augmentation set $S$ suffices to
approximate full data augmentation uniformly over the entire function class,
at the cost of only a mild logarithmic factor.

\begin{theorem}[Informal version of Theorem~\ref{thm:partial-augmentation-uniform-highprob}: uniform partial data augmentation]
\label{thm:informal-uniform-partial-augmentation} 
With high probability over the choice of a random augmentation set $S\subseteq G$ of size $|S|$, partial data augmentation using $S$
provides a uniform approximation to full data augmentation over the entire
hypothesis space $\mathcal F$.
Specifically, for all functions $f\in\mathcal F$ with bounded $L^2(\mathcal{X})$ norm,
the output obtained by averaging $f$ using $S$ is close to the output obtained
by averaging using the full group $G$, with approximation error on the order of
$\displaystyle \sqrt{\frac{\log(\min\{r,|G|\})}{|S|}}$,
where $r=\dim(\mathcal F)$.
As a consequence, a single randomly chosen augmentation set $S$ can be reused
across all learning algorithms whose outputs lie in $\mathcal F$, incurring only
a vanishing worst-case error as $|S|$ grows.
\end{theorem}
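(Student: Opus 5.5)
Our plan is to reduce the statement to a matrix concentration bound for an average of i.i.d.\ random unitary operators on the $r$-dimensional space $\mathcal F$. By the augmentation/averaging duality remark, the output of partial augmentation with $S=\{g_1,\dots,g_{|S|}\}$ applied to $f\in\mathcal F$ is $\Pi_S f=\frac{1}{|S|}\sum_{j} T_{g_j} f$. Full augmentation gives $\Pi_G f=\mathbb E_{g\sim\mathrm{Haar}}[T_g f]$, which is the orthogonal projection $P$ onto $\mathcal F^G$; this is a standard fact from Appendix~\ref{app:group_rep}, obtained by averaging the unitary representation. The uniform statement over all $f$ with $\|f\|_{L^2}\le 1$ is exactly a bound on the operator norm
\[
\Delta_S:=\Big\|\frac{1}{|S|}\sum_{j=1}^{|S|} (T_{g_j}-P)\Big\|_{\mathrm{op}} .
\]
We therefore aim to prove $\Delta_S\lesssim \sqrt{\log(\min\{r,|G|\})/|S|}$ with high probability, with a $\log(1/\delta)$ term added inside the square root.

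\textbf{Step 1 (the $\log r$ branch).} The summands $X_j:=T_{g_j}-P$ are i.i.d.\ and mean zero, since $\mathbb E T_g=P$. They satisfy $\|X_j\|\le 2$, because $T_g P=P$ and $T_g$ is unitary. We use the Hermitian dilation $\begin{pmatrix}0&X_j\\ X_j^*&0\end{pmatrix}$ of dimension $2r$, since the $X_j$ are not self-adjoint. The variance proxy is
\[
\Big\|\sum_j \mathbb E X_jX_j^*\Big\|=|S|\,\|\mathbb E\, T_gT_g^*-P\|=|S|\,\|I-P\|\le |S|.
\]
Matrix Bernstein then gives $\Delta_S\lesssim \sqrt{\log(2r/\delta)/|S|}+\log(2r/\delta)/|S|$, which already yields the rate whenever $r\le |G|$.

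\textbf{Step 2 (the $\log|G|$ branch, finite $G$).} Here we use Fourier analysis. Decompose $\mathcal F=\bigoplus_\rho \rho^{\oplus m_\rho}$ into isotypic components. On the trivial component, $\Pi_S-P$ vanishes. On each nontrivial irrep $\rho$, $\Pi_S$ acts as $\widehat\nu_S(\rho)\otimes I_{m_\rho}$, where $\widehat\nu_S(\rho)=\frac{1}{|S|}\sum_j\rho(g_j)$. So $\Delta_S=\max_{\rho\neq 1,\ \rho\subset\mathcal F}\|\widehat\nu_S(\rho)\|$, and the multiplicities $m_\rho$ drop out entirely. We apply matrix Bernstein separately to each irrep, in dimension $d_\rho$, and take a union bound over the distinct irreps. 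This costs $\log\sum_\rho d_\rho$, and $\sum_\rho d_\rho\le\sum_\rho d_\rho^2=|G|$. Combined with Step 1, the bound becomes $\log\min\{r,|G|\}$ up to constants.

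\textbf{Main obstacle.} The delicate part is Step 2. The union bound must be organized so that the dimension factors sum to at most $|G|$ (or $r$), rather than picking up an extra factor of the number of irreps. One must also handle infinite compact $G$, where only the $\log r$ branch is meaningful, and check that the Haar-average identity $\mathbb E T_g=P$ holds there. Finally, we convert the operator bound into the statement's form: for every $f\in\mathcal F$ with $\|f\|\le1$, $\|\Pi_Sf-\Pi_Gf\|\le\Delta_S$. Because of linearity, this transfers to the output of any learning algorithm landing in $\mathcal F$, which gives the reusability claim.
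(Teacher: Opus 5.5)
Your proposal is correct, and your Step~2 is essentially the paper's proof. The paper block-diagonalizes $g\mapsto T_g$ into irreducibles, so that $\|\Pi_S-\Pi_G\|_{\mathrm{op}}=\max_{\lambda\neq\mathrm{triv}}\big\|\frac{1}{|S|}\sum_{g\in S}\rho_\lambda(g)\big\|_{\mathrm{op}}$ and the multiplicities drop out. It then applies matrix Bernstein on each block with tail $2d_\lambda e^{-c|S|t^2}$ and takes a union bound.

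The genuine difference is how you get the $\log r$ branch. The paper extracts both branches from that single union bound. The irreps occurring in $\mathcal F$ satisfy $\sum_\lambda d_\lambda\le\sum_\lambda m_\lambda d_\lambda=r$ as well as $\sum_\lambda d_\lambda\le\sum_{\lambda\in\widehat G}d_\lambda^2=|G|$. So your Step~2, with the sum restricted to irreps actually present in $\mathcal F$, already yields $\log\min\{r,|G|\}$ by itself.

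Your Step~1 instead runs one matrix Bernstein on all of $\mathcal F$, using the exact variance $\mathbb E X_jX_j^*=I-P$. Given Step~2 this is redundant, but it has an advantage: it needs no irreducible decomposition at all, only the Haar identity $\mathbb E_g T_g=P$. There is no complexification and no complete-reducibility input, so it covers infinite compact $G$ directly. The paper handles that case only implicitly, since its write-up is phrased through finite-group facts such as $\sum_\lambda d_\lambda^2=|G|$.

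A minor point: in fact $\|X_j\|_{\mathrm{op}}\le 1$ rather than $2$. This is because $T_g$ commutes with $P$, so $T_g-P=T_g(I-P)$.
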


Theorem~\ref{thm:informal-uniform-partial-augmentation} shows that partial data
augmentation can be made \emph{reusable}:
once a random subset $S$ of group elements is sampled, it can be applied across
all predictors in $\mathcal F$ without sacrificing statistical guarantees.
Compared to the expectation-based bounds of
Theorem~\ref{thm:partial-augmentation-excess}, the price of uniformity is only a  
$\log(\min\{r,|G|\})$ factor. 

In particular, if $|S|$ grows slightly faster than
$\log(\min\{r,|G|\}$, the approximation error due to partial augmentation
becomes negligible.
This result provides a theoretical justification for practical pipelines in
which a single, fixed augmentation set is reused across tasks or models, and
highlights a distinction between expected and uniform guarantees for
partial data augmentation.

\paragraph{Proof sketch for Theorem~\ref{thm:informal-uniform-partial-augmentation}:} At a high level, the proof views data augmentation through its dual
subset-averaging operator. Representation theory (Fourier analysis over the group) decomposes this operator into
harmonic components, or symmetry modes, corresponding to the group action on
$\mathcal F$. Full augmentation removes all nontrivial modes, while partial
augmentation only approximately cancels them. For a random subset $S$, each mode
is controlled by a large-deviation bound, and a union bound over the relevant
modes gives the logarithmic factor $\log(\min\{r,|G|\})$.

We now specialize this uniform guarantee to the concrete setting of projection
estimators for density estimation and regression.
This allows us to translate uniform approximation of augmentation operators
directly into high-probability excess risk bounds for learning, while retaining
the computational advantages of partial augmentation.

In particular, the next theorem shows that a \emph{single} randomly chosen
augmentation set $S$ can be reused for projection-based learning, yielding
minimax-optimal rates up to a logarithmic factor that quantifies the cost of
uniformity.

\begin{theorem}[Uniform partial data augmentation for projection estimators]
\label{thm:partial-augmentation-excess-logG-compact}
Let $\mathcal{F}\subset L^2(\mathcal X,\mu)$ be a finite-dimensional space of
dimension $r$, closed under the action of a group $G$, and let
$\mathcal{F}^G$ denote its invariant subspace with dimension $r_{\mathrm{inv}}$.
Assume $f^\star\in L^2(\mathcal{X})\cap L^\infty(\mu)$.
Let $x_1,\dots,x_n\sim\mu$ be i.i.d., and let
$S=\{g_1,\dots,g_{|S|}\}$ be i.i.d.\ uniform samples from $G$.
Let $\widehat f_S$ denote the projection estimator obtained via
\emph{partial data augmentation} using $S$.

Then, for any $\delta\in(0,1)$, with probability at least $1-\delta$ over the
draw of $S$, the following bound holds simultaneously for all
$f^\star\in\mathcal F^G$ with $\|f^\star\|_\infty\le B$:
\[
\mathbb E\!\left[\big\|\widehat f_S-\Pi_{\mathcal F^G}f^\star\big\|_{L^2(\mathcal{X})}^2
\;\middle|\; S\right]
\le
C \left(
\frac{r_{\mathrm{inv}}}{n}
+
\frac{r \log\!\big(\min\{r,|G|\}/\delta\big)}{n|S|}
\right),
\]
where $C>0$ depends only on $B$. The same bound holds for projection
regression estimators under additive zero-mean noise with variance $\sigma^2$,
with $C$ also depending on $\sigma^2$.

\end{theorem}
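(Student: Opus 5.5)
We work with the dual description of augmentation from the remark on augmentation and group averaging. Since projection estimators are linear in the dataset, $\widehat f_S=\Pi_S\widehat f$, where $\widehat f$ is the unaugmented projection estimator and $\Pi_S=\frac1{|S|}\sum_{g\in S}T_g$ acts on $\mathcal F$. Write $P:=\Pi_{\mathcal F^G}$ restricted to $\mathcal F$; this is the full-group average $\int_G T_g\,dg$. Since $f^\star\in\mathcal F^G$ in the statement, $\Pi_{\mathcal F^G}f^\star=Pf^\star=\Pi_{\mathcal F}f^\star$. Because $\Pi_S P=P$, we decompose
\[
\widehat f_S-Pf^\star \;=\; P(\widehat f-\Pi_{\mathcal F}f^\star)\;+\;(\Pi_S-P)(I-P)\widehat f ,
\]
and use $(a+b)^2\le 2a^2+2b^2$. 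Conditionally on $S$, we bound the expectation over the data of each term separately.

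\textbf{Invariant term.} This term does not involve $S$. We have $\mathbb E\|P(\widehat f-\Pi_{\mathcal F}f^\star)\|^2=\frac1n\sum_{\ell\le r_{\mathrm{inv}}}\operatorname{Var}\psi_\ell(x)$ for an orthonormal basis $(\psi_\ell)$ of $\mathcal F^G$, and the same holds for $y\psi_\ell(x)$ in regression. Each variance is at most $\mathbb E[\psi_\ell^2 f^\star]\le B$ for density estimation, and at most $B^2+\sigma^2$ for regression. This gives $C r_{\mathrm{inv}}/n$ uniformly over all $f^\star$ with $\|f^\star\|_\infty\le B$, exactly as in Theorem~\ref{thm:partial-augmentation-excess}.

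\textbf{Non-invariant term.} This term is bounded by $\|(\Pi_S-P)(I-P)\|_{\mathrm{op}}^2\,\|(I-P)\widehat f\|^2$. Since $f^\star$ is invariant, $(I-P)\Pi_{\mathcal F}f^\star=0$, so $(I-P)\widehat f$ is mean zero and $\mathbb E\|(I-P)\widehat f\|^2\le (r-r_{\mathrm{inv}})\max(B,B^2+\sigma^2)/n\le C r/n$. It therefore suffices to show that, with probability $1-\delta$ over $S$,
\[
\|(\Pi_S-P)(I-P)\|_{\mathrm{op}}^2=\Big\|\tfrac1{|S|}\textstyle\sum_{g\in S}T_g(I-P)\Big\|_{\mathrm{op}}^2\le C\,\frac{\log(\min\{r,|G|\}/\delta)}{|S|}.
\]
This is the content of Theorem~\ref{thm:partial-augmentation-uniform-highprob} (the formal version of Theorem~\ref{thm:informal-uniform-partial-augmentation}), which we invoke. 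Because this operator event depends only on $S$, the resulting bound holds simultaneously for all admissible $f^\star$.

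\textbf{Main obstacle: the operator bound.} If we had to prove it, we would argue as follows. The matrices $X_g=T_g(I-P)$ for $g\sim\mathrm{Haar}$ are i.i.d., mean zero (since $\int T_g\,dg=P$), and have operator norm at most $1$ because $T_g$ is unitary. Matrix Bernstein (Hermitian dilation) on the $r$-dimensional space then yields $\sqrt{\log(r/\delta)/|S|}$.

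To obtain $\log|G|$ instead when $|G|<r$, we decompose $\mathcal F$ into isotypic components via Fourier analysis. The operator $\Pi_S$ acts on the $\rho$-isotypic component as $\widehat\mu_S(\rho)\otimes I$, where $\widehat\mu_S(\rho)=\frac1{|S|}\sum_{g\in S}\rho(g)$. The operator norm is therefore the maximum over the nontrivial irreducibles $\rho$ that occur in $\mathcal F$ of $\|\widehat\mu_S(\rho)\|$. We apply matrix Bernstein in dimension $d_\rho$ to each $\rho$ and take a union bound. The number of such irreducibles is at most $\min\{r,\#\widehat G\}\le\min\{r,|G|\}$, and each $d_\rho\le\min\{r,\sqrt{|G|}\}$. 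Hence the total logarithmic factor is $O(\log(\min\{r,|G|\}/\delta))$.

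Combining the three bounds gives the claimed inequality, with $C$ depending only on $B$ (and on $\sigma^2$ for regression).
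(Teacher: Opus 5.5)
Your proposal is correct and follows essentially the same route as the paper. It uses the same split of $\widehat f_S-\Pi_{\mathcal F^G}f^\star$ into the projected estimation error $P(\widehat f-\Pi_{\mathcal F}f^\star)$ plus $(\Pi_S-P)$ applied to the centered estimator; note that your $(\Pi_S-P)(I-P)$ is just $\Pi_S-P$. As in the paper, the latter term is controlled via the high-probability operator-norm bound of Theorem~\ref{thm:partial-augmentation-uniform-highprob}, which is proved by isotypic blocks, matrix Bernstein, and a union bound. Using $r-r_{\mathrm{inv}}$ instead of $r$ and a direct $r$-dimensional Bernstein argument for infinite $G$ are minor refinements, not a different approach.
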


Theorem~\ref{thm:partial-augmentation-excess-logG-compact} refines
Theorem~\ref{thm:partial-augmentation-excess} by providing a high-probability,
reusable guarantee for partial data augmentation.
In particular, the augmentation set \(S\) is reusable: on the same
high-probability event, the bound holds uniformly even if
\(f^\star\in\mathcal F^G\) is chosen adversarially after observing \(S\).
Compared to the expectation-based bound, the only additional cost is a
$\log(\min\{r,|G|\})$ factor, which arises from enforcing uniform control over the
entire function space $\mathcal F$.
Importantly, this logarithmic dependence remains mild even when the group $G$ is
large or infinite, since its dependence on the ambient dimension of the data can be benign.
Thus, uniform partial data augmentation simultaneously achieves statistical
efficiency, reusability, and computational scalability.

\begin{remark}[Random versus structured augmentation sets]
Although group-specific designs for the subset $S\subseteq G$ can sometimes yield
sharper bounds than a random choice, improving over the logarithmic factor
$\log |G|$, constructing such designs is often combinatorial and remains an
active area of research; see, e.g., \citep{v009a015, bourgain2008uniform}. Our results show that random augmentation sets already provide
uniform guarantees with only logarithmic dependence on $|G|$. This relies on the
expansion properties of random subsets of groups \citep{alon1994random}.
\end{remark}

The preceding results demonstrate that partial data augmentation is sufficient
to achieve the full \emph{statistical} benefits of symmetry, both in expectation
and uniformly with high probability.
This naturally raises a complementary question: \emph{Can partial data augmentation
also be used to enforce \emph{exact} invariance under a symmetry group?}

Our final result shows that this is fundamentally impossible in general.
While partial augmentation can approximate full augmentation arbitrarily well,
exact invariance places a much stronger requirement that cannot be met without
averaging over the entire group.

\begin{theorem}[Informal version of Theorem~\ref{thm:exact-invariance-forces-full-group}: exact invariance requires full augmentation]
\label{thm:informal-exact-invariance}
For finite groups, partial data augmentation cannot enforce \emph{exact}
$G$-invariance unless the full group is used.
More precisely, if a learning procedure based on averaging over a subset
$S\subseteq G$ produces outputs that are exactly invariant under all elements
of $G$, and if the hypothesis space is rich enough to represent all irreducible
symmetry modes of $G$, then $S$ must coincide with the full group.
Equivalently, exact symmetry enforcement via data augmentation fundamentally
requires averaging over all group elements; partial augmentation can only yield
approximate invariance.
\end{theorem}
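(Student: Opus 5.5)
The plan is to formalize the statement through the subset-averaging operator $\Pi_S=\frac{1}{|S|}\sum_{g\in S}T_g$ acting on $\mathcal F$. By the remark on augmentation and group averaging, augmenting by $S$ is the same as applying $\Pi_S$ to the unaugmented estimator. So ``exact $G$-invariance of all outputs'' becomes the operator condition $T_h\Pi_S=\Pi_S$ for every $h\in G$, on a space $\mathcal F$ that ranges over enough of the estimator's outputs. We take ``rich enough'' to mean that $\mathcal F$ contains a copy of every irreducible unitary representation $\rho$ of the finite group $G$. The cleanest instance is that $\mathcal F$ contains a copy of the regular representation, e.g.\ $\mathcal F\supseteq\{x\mapsto \varphi(g^{-1}x)\}$ spanning $\mathbb C[G]$ when the action has a free orbit.

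The first step is to decompose $\mathcal F$ isotypically into symmetry modes. On the copy of each irreducible $\rho$, $\Pi_S$ acts as the matrix $\widehat{\mathbf 1_S}(\rho):=\frac{1}{|S|}\sum_{g\in S}\rho(g)$. Exact invariance of the outputs means every vector in the range of $\Pi_S$ is fixed by $G$. For a nontrivial irreducible $\rho$ the fixed space is $\{0\}$, so invariance forces $\widehat{\mathbf 1_S}(\rho)=0$ for all $\rho\neq\mathbf 1$. For the trivial representation we get $\widehat{\mathbf 1_S}(\mathbf 1)=1$. Both statements hold provided the corresponding modes actually appear in $\mathcal F$, which is exactly the richness hypothesis.

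The second step is Fourier inversion on $G$. Taking $S$ to be a set (or more generally a multiset with weights $w(g)=$ multiplicity$/|S|$), the function $w$ has Fourier transform $\widehat w(\rho)=\delta_{\rho,\mathbf 1}$. This equals the Fourier transform of the uniform density $1/|G|$. The inversion formula $w(g)=\frac{1}{|G|}\sum_\rho d_\rho\,\mathrm{tr}(\widehat w(\rho)\rho(g)^*)$ then gives $w\equiv 1/|G|$. Hence every group element appears in $S$ with equal positive weight, so $S=G$ as a set (and any multiset must be a uniform multiple of $G$). Equivalently, and without the inversion formula, $\Pi_S$ restricted to the regular representation is convolution by $w$. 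Invariance of its range forces $w$ to be left-translation invariant, hence constant.

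The main obstacle is the first step: the modeling choice that makes ``rich enough'' precise and connects invariance of outputs to vanishing of every nontrivial Fourier coefficient. One must check that the outputs of $\Pi_S$ probe every vector in each isotypic component. This holds because $\Pi_S$ is applied to arbitrary $f\in\mathcal F$, and projection estimators can output any element of $\mathcal F$ as the data vary. The inversion step itself is routine. I would first attempt the argument in the regular-representation form to avoid bookkeeping with multiplicities.
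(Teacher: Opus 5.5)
Your proposal is correct and follows essentially the same route as the paper's proof of Theorem~\ref{thm:exact-invariance-forces-full-group}. You block-diagonalize $\Pi_S$ over the isotypic decomposition, use representation-completeness to get $\frac{1}{|S|}\sum_{g\in S}\rho_\lambda(g)=0$ for every nontrivial irreducible $\lambda$, and then apply Fourier inversion on $G$ to force the multiplicity function of $S$ to be constant, so a genuine subset must equal $G$.

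Two of your details improve slightly on the paper. You derive the vanishing blocks directly from $T_h\Pi_S=\Pi_S$: the columns of each block are $G$-fixed vectors of a nontrivial irreducible, hence zero. This makes explicit the equivalence with $\Pi_S=\Pi_G$ that the paper only asserts. Your convolution argument on $\mathbb C[G]$ is also a valid alternative to the inversion formula. It applies even when $\mathcal F$ does not literally contain the regular representation, because the vanishing of the nontrivial blocks transfers to $\mathbb C[G]$.
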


Theorem~\ref{thm:informal-exact-invariance} establishes a sharp separation
between approximate and exact symmetry enforcement via data augmentation.
In contrast to the statistical guarantees obtained with partial augmentation,
exact $G$-invariance requires full group-sized augmentation whenever the
hypothesis space is sufficiently expressive.

 \begin{remark}
The preceding result should be interpreted within the black-box augmentation
framework described earlier. It does not preclude specialized methods that exploit
the group structure directly and bypass augmentation altogether
\citep{soleymani2025learning}. Rather, our aim here is to isolate the effect of
the augmentation subset $S$ when the estimator is fixed.
\end{remark}

\begin{figure}[t]
\centering
\begin{tikzpicture}[xscale=1.0,yscale=1.25]

    \tikzset{
        threshold/.style={dotted, thick},
        statthreshold/.style={solid, very thick},
        riskcurve/.style={very thick}
    }

    \def\xmax{10.6}
    \def\ymax{4.9}
    \def\ytop{3.25}
    \def\yflat{1.00}
    \def\xstat{3.4}
    \def\xunif{6.6}
    \def\xexact{9.2}
    \def\alpha{1.5} 
    
\fill[gray!10] (\xstat,0) rectangle (\xmax + 1.5,\ymax);
\node[anchor=north west] at (\xstat+0.15,3.95)
    {};

    \draw[->, thick] (0,0) -- (\xmax+2,0) node[right] {\(|S|\)};
    \draw[->, thick] (0,0) -- (0,\ymax);
    \node[rotate=90] at (-1.05,2.45) {\footnotesize Excess risk};

    \draw[dashed] (0,\ytop) -- (\xmax+1.5,\ytop);
    \node[left] at (0,\ytop) {\( \scriptstyle r/n\)};

    \draw[dashed] (0,\yflat) -- (\xmax-0.2,\yflat);
    \node[left] at (0,\yflat) {\(\ \scriptstyle r_{\mathrm{inv}}/n\)};

    \draw[riskcurve, domain=0:\xstat, samples=120, smooth]
        plot (\x, { \yflat + (\ytop-\yflat) * (1-\x/\xstat)/(1+\alpha*\x/\xstat) });

    \draw[riskcurve] (\xstat,\yflat) -- (\xmax+1.5,\yflat);

    \draw[statthreshold] (\xstat,0) -- (\xstat,4.35);
    \draw[threshold] (\xunif,0) -- (\xunif,4.35);
    \draw[threshold] (\xexact,0) -- (\xexact,4.35);

    \fill (0,\ytop) circle (2.4pt);
    \fill (\xstat,\yflat) circle (2.4pt);
    \fill (\xunif,\yflat) circle (2.4pt);
    \fill (\xexact,\yflat) circle (2.4pt);

    \node[below] at (0,0) {\(\scriptstyle 1\)};
    \node[below] at (\xstat,0) {\(\scriptstyle r/r_{\mathrm{inv}}\)};
    \node[below] at (\xunif,0) {\(\scriptstyle r\log D/r_{\mathrm{inv}}\)};
    \node[below] at (\xexact,0) {\(\scriptstyle D+1\)};

    \node[align=center] at (1.55,4.12)
        {\footnotesize Phase I:\\[-1pt]\footnotesize Statistical \\[-1pt]\footnotesize interpolation};

    \node[align=center] at (5.00,4.12)
        {\footnotesize Phase II:\\[-1pt]\footnotesize Statistical \\[-1pt]\footnotesize optimality};

    \node[align=center] at (7.95,4.12)
        {\footnotesize Phase III:\\[-1pt]\footnotesize Uniform \\[-1pt]\footnotesize reusability};

    \node[align=center] at (10.75,4.12)
        {\footnotesize Phase IV:\\[-1pt]\footnotesize Exact invariance};

\end{tikzpicture}
\caption{
Phase diagram for partial augmentation. The risk decreases from the
ordinary rate \(r/n\) to the invariant rate \(r_{\mathrm{inv}}/n\), and
saturates once \(|S|\asymp r/r_{\mathrm{inv}}\). Larger augmentation sets may
still be required for stronger guarantees, such as uniform reusability and
exact invariance.
}
\label{fig:augmentation-phase-diagram}
\end{figure}

\section{Interpreting the Regimes of Partial Augmentation}
\label{subsec:phase-transition}

Our results reveal several distinct regimes for partial data augmentation. These
regimes are all governed by the size of the augmentation set \(S\), but they
correspond to different goals: statistical optimality, uniform reusability, and
exact invariance. A key message is that these goals do not coincide. In
particular, the statistical risk can already match that of full augmentation
well before the augmentation operator is reusable uniformly over the whole
function class, and far before it enforces exact invariance.

To make this distinction explicit, consider the projection-estimator setting
with \(r=\dim(\mathcal F)\) and
\(r_{\mathrm{inv}}=\dim(\mathcal F^G)\). Without augmentation, the estimation
error scales as \(r/n\). Full augmentation reduces the effective dimension to
\(r_{\mathrm{inv}}\), yielding the invariant rate \(r_{\mathrm{inv}}/n\).
Partial augmentation interpolates between these two rates. Indeed, the
augmentation error is controlled by
$
    \frac{r}{n|S|},
$
so the total error behaves as
$
    \frac{r_{\mathrm{inv}}}{n}
    +
    \frac{r}{n|S|}.
$
Thus, when \(|S|\) is small, partial augmentation only partially suppresses the
non-invariant directions, and the risk lies between the ordinary rate \(r/n\)
and the invariant rate \(r_{\mathrm{inv}}/n\). The first critical threshold is
therefore
\[
    |S|_{\mathrm{stat}}
    \asymp
    \frac{r}{r_{\mathrm{inv}}}.
\]
Once \(|S|\gtrsim r/r_{\mathrm{inv}}\), the augmentation error is of the same
order as, or smaller than, the invariant estimation error. In this regime,
partial augmentation is statistically indistinguishable from full augmentation,
up to constants. This threshold is optimally characterized by our upper  bound: below this scale, the additional augmentation error is unavoidable,
while above it the minimax rate saturates at the invariant rate. Thus,
\(r/r_{\mathrm{inv}}\) marks the statistical phase transition, beyond which
increasing \(|S|\) no longer improves the minimax rate.

A stronger requirement is uniform reusability. Rather than controlling the risk
of a fixed estimator in expectation, one may want a single sampled set \(S\) to
work uniformly over all functions in \(\mathcal F\). This requires high-probability
control of the operator norm \(\|\Pi_S-\Pi_G\|_{\mathrm{op}}\). Our uniform
bound shows that this incurs only a logarithmic overhead:
\[
    \|\Pi_S-\Pi_G\|_{\mathrm{op}}^2
    \lesssim
    \frac{\log N}{|S|},
    \qquad
    N:=\min\{r,|G|\}.
\]
More sharply, \(N\) may be replaced by a representation-theoretic Fourier
complexity
\[
    D
    :=
    \sum_{\lambda\in\Lambda} d_\lambda^2 \le \min\{r, |G|-1\},
\]
where \(\Lambda\) denotes the set of nontrivial irreducible representations of
\(G\) that appear in \(\mathcal F\), and \(d_\lambda\) is the dimension of the
irrep \(\lambda\). For background on group representations, see Appendix~\ref{app:group_rep}. 

Consequently, the reusable-uniform regime begins once
\[
    |S|
    \gtrsim
    \frac{r}{r_{\mathrm{inv}}}\log (\min\{N,D\}),
\]
Equivalently, the transition size satisfies the upper bound 
\[ |S|_{\mathrm{unif}} \lesssim \frac{r}{r_{\mathrm{inv}}}\log(\min\{N,D\}). \]

This logarithmic factor is the price of reusing the same random augmentation set
uniformly over many Fourier modes. It is unavoidable in worst-case
representations (e.g., in case of commutative groups), although the exact instance-wise threshold can depend on finer
group-theoretic structure.

Finally, exact invariance is a still stronger requirement. Statistical
optimality and uniform reusability require \(\Pi_S\) to approximate \(\Pi_G\),
whereas exact invariance requires equality of averaging operators:
\[
    \Pi_S=\Pi_G
    \qquad
    \text{on } \mathcal F .
\]
This can force \(S=G\) when \(\mathcal F\) is
sufficiently expressive, for example when it contains all irreducible symmetry
modes. Thus exact symmetry enforcement can be much more expensive than
statistical optimality.
Moreover, if weighted augmentation is allowed, one can obtain a general upper bound using
Carathéodory's theorem. Let the complexified representation of \(G\) on
\(\mathcal F\) decompose as
\[
    \mathcal F_{\mathbb C}
    \cong
    \mathcal F^G_{\mathbb C}
    \oplus
    \bigoplus_{\lambda\in\Lambda}
    \mathbb C^{m_\lambda}\otimes V_\lambda ,
\]
where \(\Lambda\) contains only the nontrivial irreducible representations
appearing in \(\mathcal F\). A weighted augmentation rule
\[
    \nu=\sum_{s\in S} w_s\delta_s,
    \qquad
    w_s\ge 0,
    \qquad
    \sum_{s\in S} w_s=1,
\]
satisfies \(\Pi_\nu=\Pi_G\) on \(\mathcal F\) if and only if
\[
    \sum_{s\in S} w_s \rho_\lambda(s)=0
    \qquad
    \text{for every } \lambda\in\Lambda .
\]
Equivalently, the origin must be written as a convex combination of the Fourier
feature vectors
\[
    g
    \longmapsto
    \big(\rho_\lambda(g)\big)_{\lambda\in\Lambda}.
\]
These vectors lie in a real vector space of dimension at most
$
    D
    =
    \sum_{\lambda\in\Lambda} d_\lambda^2,
$
up to the standard realification of complex matrix coefficients. Since Haar
averaging places the origin in their convex hull, Carathéodory's theorem implies
that there exists a weighted exact augmentation rule with
\[
    |S|_{\mathrm{exact}}
    \le
    D+1.
\]
This is a bound for weighted augmentation only; it does not imply the existence
of an unweighted subset of the same size. Moreover, the exact threshold is
problem-dependent. It can be as large as \(|G|\) in representation-complete
settings, while it can be much smaller in special incomplete representations.

In summary, partial augmentation exhibits a hierarchy of increasingly stringent
requirements:
\[
    |S|_{\mathrm{stat}}
    \;\lesssim\;
    |S|_{\mathrm{unif}}
    \;\lesssim\;
    |S|_{\mathrm{exact}} .
\]
The first threshold governs statistical optimality, the second governs uniform
reusability, and the third governs exact invariance. The statistical risk,
however, already saturates at the first threshold:
\[
    \mathbb E\|\widehat f_S-\Pi_{\mathcal F^G}f^\star\|_{L^2(\mathcal X)}^2
    \asymp
    \frac{r_{\mathrm{inv}}}{n}.
\]
The later thresholds provide stronger guarantees, but they do
not improve the statistical rate.

\subsection{Extension to Binary Classification}
\label{subsec:classification}

Although our results are stated for regression and density estimation, they also
immediately imply guarantees for binary classification through a standard
plug-in argument. We assume that \((X,Y)\sim P\), where \(Y\in\{\pm1\}\), and the
optimal prediction rule is the Bayes classifier induced by the regression
function
\[
    f^\star(x)
    :=
    \mathbb E[Y\mid X=x].
\]
The Bayes classifier is
\[
    y^\star(x)
    :=
    \operatorname{sign}(f^\star(x)).
\]
Given an estimator \(\widehat f\) of \(f^\star\), we form the plug-in classifier
\[
    \widehat y(x)
    :=
    \operatorname{sign}(\widehat f(x)) \in \{\pm 1\}.
\]
Then the excess classification risk is controlled by the regression error:
\[
    \mathbb P(\widehat y(X)\neq Y)
    -
    \mathbb P(y^\star(X)\neq Y)
    \le
    \mathbb E_X\big[|\widehat f(X)-f^\star(X)|\big]
    \le
    \|\widehat f-f^\star\|_{L^2(\mathcal X)} .
\]
Indeed, the excess risk can be written as
\[
    \mathbb E_X
    \Big[
        |f^\star(X)|
        \mathbf 1\{\widehat y(X)\neq y^\star(X)\}
    \Big].
\]
On the event \(\widehat y(X)\neq y^\star(X)\), the signs of
\(\widehat f(X)\) and \(f^\star(X)\) disagree, and hence
\[
    |f^\star(X)|
    \le
    |\widehat f(X)-f^\star(X)|.
\]
Thus, the claimed bound follows.
Consequently, the regression guarantees in this paper directly imply
classification guarantees for the corresponding plug-in classifiers. For example, if \(\widehat f_S\) denotes the partially augmented projection
estimator and the target lies in the invariant class, then the same argument
used for our regression bounds shows that
\[
    \mathbb E
    \big[
        \|\widehat f_S-f^\star\|_{L^2(\mathcal X)}^2
    \big]
    \lesssim
    \frac{r_{\mathrm{inv}}}{n}
    +
    \frac{r}{n|S|}
\]
which implies
\[
    \mathbb E
    \Big[
        \mathbb P(\widehat y_S(X)\neq Y)
        -
        \mathbb P(y^\star(X)\neq Y)
    \Big]
    \lesssim
    \left(
        \frac{r_{\mathrm{inv}}}{n}
        +
        \frac{r}{n|S|}
    \right)^{1/2}.
\]
Thus, the same augmentation regimes appear in binary classification: partial
augmentation interpolates between the ordinary and invariant rates, and once
\(|S|\gtrsim r/r_{\mathrm{inv}}\), the plug-in classifier achieves the same
classification rate as the fully augmented estimator, up to constants. Moreover,
the same uniform reusability guarantee applies: a single randomly chosen
augmentation set \(S\) can be reused across classifiers whose underlying
regression estimates lie in \(\mathcal F\).

\section{Projection Estimators on the Sphere}
\label{sec:sphere-projection}

We briefly specialize our framework to the unit sphere
$\mathbb S^{d-1}$, where projection estimators admit a simple and efficient
implementation via spherical harmonics.
This setting illustrates that our results apply naturally to symmetry subgroups of the orthogonal group
$G\subseteq \mathrm{O}(d)$ and that partial data augmentation is computationally tractable,
in contrast to full group-sized augmentation. 
For more details and explanations on applications to the sphere, see
Appendices~\ref{apd:bck:sh}, \ref{apd:bck:sh2}, and \ref{apd:bck:sh3}.

Let $\mathcal X=\mathbb S^{d-1}$ with $\mu$ the uniform probability measure.
For each degree $\ell\ge 0$, let $\mathcal H_\ell$ denote the space of spherical
harmonics of degree $\ell$, with dimension $N_\ell$.
For a cutoff $k\ge 0$, define the truncated space
$
\mathcal F := \bigoplus_{\ell=0}^{k}\mathcal H_\ell$ with
$r := \dim(\mathcal F)=\sum_{\ell=0}^k N_\ell
$,
and let $\Pi_{\le k}$ denote the $L^2(\mu)$-orthogonal projection onto $\mathcal F$.
Each $\mathcal H_\ell$ is invariant under $\mathrm{O}(d)$, and hence $\mathcal F$ is closed
under the action of any subgroup $G\subseteq \mathrm{O}(d)$, with invariant subspace
$\mathcal F^G$ of dimension $r_{\mathrm{inv}}$.

Projection estimators for density estimation and regression on
$\mathbb S^{d-1}$ admit kernel representations.
Define the degree-$\ell$ zonal kernel
$
Z_\ell(x,x') := \sum_{j=1}^{N_\ell}\phi_{\ell,j}(x)\phi_{\ell,j}(x'),
$
and the truncated kernel
$
\Pi_{\le k}(x,x') := \sum_{\ell=0}^k Z_\ell(x,x').
$
By the addition theorem for spherical harmonics, $Z_\ell(x,x')$ depends only on
$\langle x,x'\rangle$ and has a closed-form expression in terms of Gegenbauer
polynomials.
Consequently, $\Pi_{\le k}$ can be evaluated using only inner products, with
computational cost $O(k)$ per evaluation via standard three-term recurrences.

As a result, projection estimators take the form
\[
\widehat f(x)
=
\frac{1}{n}\sum_{i=1}^n \Pi_{\le k}(x,x_i)
\quad\text{(density estimation)},
\quad
\widehat f(x)
=
\frac{1}{n}\sum_{i=1}^n y_i\,\Pi_{\le k}(x,x_i)
\quad\text{(regression)}.
\]

For large or continuous groups $G\subseteq  \mathrm{O}(d)$, full data augmentation, averaging over
all group elements, is computationally infeasible.
In contrast, partial data augmentation using a small random subset $S\subseteq G$
is efficient and compatible with the kernel form above.
Our main results show that such partial augmentation suffices to recover the full
statistical benefits of symmetry, with rates governed by $r_{\mathrm{inv}}$ rather
than the size of $G$.

 \section{Conclusion}

In this paper, we studied the statistical and computational role of data
augmentation for learning problems with known symmetries.
Focusing on classical density estimation and regression with finite-dimensional
projection estimators, we developed a general theoretical framework for
understanding when and how \emph{partial} data augmentation can replace full
group-sized augmentation.

Our results show that partial data augmentation is surprisingly powerful.
In expectation, augmenting with a small random subset of group elements is
sufficient to recover the full statistical benefits of symmetry, achieving
minimax-optimal rates identical to those obtained with full augmentation.
Moreover, we proved that a single randomly chosen augmentation set can be reused
uniformly across all functions in the hypothesis space, incurring only a mild
logarithmic overhead.

At the same time, we establish a complementary impossibility result: when the hypothesis space is sufficiently expressive, enforcing \emph{exact} invariance through data augmentation requires averaging over the entire group. This result reveals a sharp separation between approximate and exact symmetry enforcement, and highlights an inherent computational--statistical trade-off. Together, our findings also shed light on recent questions about the statistical role of invariance in learning, and suggest that approximate symmetry may often be the right computational target \citep{diaz2025invariant}.

\section*{Acknowledgments}
BT and MW were partially supported by NSF Award CBET-2112085 and DMS-2406905. MW acknowledges partial funding from an Alfred P. Sloan Fellowship in Mathematics and the AI2050 program at Schmidt Sciences (Grant G-25-69786). SJ acknowledges support from an Alexander von Humboldt Professorship.

   \bibliographystyle{plainnat}
\bibliography{references}

@string{aistats = {Int. Conference on Artificial Intelligence and Statistics (AISTATS)}}

@string{icml = {Int. Conference on Machine Learning (ICML)}}

@string{iclr = {Int. Conference on Learning Representations (ICLR)}}

@string{colt = {Conference on Learning Theory (COLT)}}

@string{neurips = {Advances in Neural Information Processing Systems (NeurIPS)}}

@inproceedings{kaba2023equivariance,
  title={Equivariance with learned canonicalization functions},
  author={Kaba, S{\'e}kou-Oumar and Mondal, Arnab Kumar and Zhang, Yan and Bengio, Yoshua and Ravanbakhsh, Siamak},
  booktitle=icml,
  year={2023}
}

@inproceedings{ma2024canonicalization,
  title={A canonicalization perspective on invariant and equivariant learning},
  author={Ma, George and Wang, Yifei and Lim, Derek and Jegelka, Stefanie and Wang, Yisen},
  booktitle=neurips,
  year={2024}
}

@article{zhang2025artificial,
  title={Artificial intelligence for science in quantum, atomistic, and continuum systems},
  author={Zhang, Xuan and Wang, Limei and Helwig, Jacob and Luo, Youzhi and Fu, Cong and Xie, Yaochen and Liu, Meng and Lin, Yuchao and Xu, Zhao and Yan, Keqiang and others},
  journal={Foundations and Trends{\textregistered} in Machine Learning},
  volume={18},
  number={4},
  pages={385--912},
  year={2025},
  publisher={Now Publishers, Inc.}
}

@inproceedings{tahmasebi2023exact,
  title={The exact sample complexity gain from invariances for kernel regression},
  author={Tahmasebi, Behrooz and Jegelka, Stefanie},
  booktitle=neurips,
  year={2023}
}

@inproceedings{tahmasebigeneralization,
  title={Generalization Bounds for Canonicalization: A Comparative Study with Group Averaging},
  author={Tahmasebi, Behrooz and Jegelka, Stefanie},
  booktitle=iclr,
  year={2025}
}

@inproceedings{puny2021frame,
  title={Frame averaging for invariant and equivariant network design},
  author={Puny, Omri and Atzmon, Matan and Ben-Hamu, Heli and Misra, Ishan and Grover, Aditya and Smith, Edward J and Lipman, Yaron},
  booktitle=iclr,
  year={2022}
}

@inproceedings{dao2019kernel,
  title={A kernel theory of modern data augmentation},
  author={Dao, Tri and Gu, Albert and Ratner, Alexander and Smith, Virginia and De Sa, Chris and R{\'e}, Christopher},
  booktitle=icml,
  year={2019}
}

@article{zhao2022graph,
  title={Graph data augmentation for graph machine learning: A survey},
  author={Zhao, Tong and Jin, Wei and Liu, Yozen and Wang, Yingheng and Liu, Gang and G{\"u}nnemann, Stephan and Shah, Neil and Jiang, Meng},
  journal={arXiv preprint arXiv:2202.08871},
  year={2022}
}

@inproceedings{mikolajczyk2018data,
  title={Data augmentation for improving deep learning in image classification problem},
  author={Miko{\l}ajczyk, Agnieszka and Grochowski, Micha{\l}},
  booktitle={2018 international interdisciplinary PhD workshop (IIPhDW)},
  pages={117--122},
  year={2018},
  organization={IEEE}
}

@article{mumuni2022data,
  title={Data augmentation: A comprehensive survey of modern approaches},
  author={Mumuni, Alhassan and Mumuni, Fuseini},
  journal={Array},
  volume={16},
  pages={100258},
  year={2022},
  publisher={Elsevier}
}

@article{shorten2019survey,
  title={A survey on image data augmentation for deep learning},
  author={Shorten, Connor and Khoshgoftaar, Taghi M},
  journal={Journal of big data},
  volume={6},
  number={1},
  pages={1--48},
  year={2019},
  publisher={Springer}
}

@inproceedings{mei2021learning,
  title={Learning with invariances in random features and kernel models},
  author={Mei, Song and Misiakiewicz, Theodor and Montanari, Andrea},
  booktitle=colt,
  year={2021}
}

@inproceedings{petrache2023approximation,
  title={Approximation-generalization trade-offs under (approximate) group equivariance},
  author={Petrache, Mircea and Trivedi, Shubhendu},
  booktitle=neurips,
  year={2023}
}

@article{v009a015,
 author = {Alon, Noga and Lovett, Shachar},
 title = {Almost $k$-Wise vs. $k$-Wise Independent Permutations, and Uniformity for General Group Actions},
 year = {2013},
 pages = {559--577},
 publisher = {Theory of Computing},
 journal = {Theory of Computing},
 volume = {9},
 number = {15}
}

@article{alon1994random,
  title={Random Cayley graphs and expanders},
  author={Alon, Noga and Roichman, Yuval},
  journal={Random Structures \& Algorithms},
  volume={5},
  number={2},
  pages={271--284},
  year={1994},
  publisher={Wiley Online Library}
}

@inproceedings{shen2022data,
  title={Data augmentation as feature manipulation},
  author={Shen, Ruoqi and Bubeck, S{\'e}bastien and Gunasekar, Suriya},
  booktitle=icml,
  year={2022}
}

@article{lin2024good,
  title={The good, the bad and the ugly sides of data augmentation: An implicit spectral regularization perspective},
  author={Lin, Chi-Heng and Kaushik, Chiraag and Dyer, Eva L and Muthukumar, Vidya},
  journal={Journal of Machine Learning Research},
  volume={25},
  number={91},
  pages={1--85},
  year={2024}
}

@article{chen2020group,
  title={A group-theoretic framework for data augmentation},
  author={Chen, Shuxiao and Dobriban, Edgar and Lee, Jane H},
  journal={Journal of Machine Learning Research},
  volume={21},
  number={245},
  pages={1--71},
  year={2020}
}

@inproceedings{kirichenko2023understanding,
  title={Understanding the detrimental class-level effects of data augmentation},
  author={Kirichenko, Polina and Ibrahim, Mark and Balestriero, Randall and Bouchacourt, Diane and Vedantam, Shanmukha Ramakrishna and Firooz, Hamed and Wilson, Andrew G},
  booktitle=neurips,
  year={2023}
}

@inproceedings{patil2023generalized,
  title={Generalized equivalences between subsampling and ridge regularization},
  author={Patil, Pratik and Du, Jin-Hong},
  booktitle=neurips,
  year={2023}
}

@inproceedings{tahmasebi2025regularity,
  title={Regularity in Canonicalized Models: A Theoretical Perspective},
  author={Tahmasebi, Behrooz and Jegelka, Stefanie},
  booktitle=aistats,
  year={2025}
}

@article{ma2025comprehensive,
  title={A comprehensive survey of data augmentation in visual reinforcement learning},
  author={Ma, Guozheng and Wang, Zhen and Yuan, Zhecheng and Wang, Xueqian and Yuan, Bo and Tao, Dacheng},
  journal={International Journal of Computer Vision},
  pages={1--38},
  year={2025},
  publisher={Springer}
}

@inproceedings{zheng2023toward,
  title={Toward understanding generative data augmentation},
  author={Zheng, Chenyu and Wu, Guoqiang and Li, Chongxuan},
  booktitle=neurips,
  year={2023}
}

@inproceedings{pmlrv202dong23f,
  title = 	 {Adaptively Weighted Data Augmentation Consistency Regularization for Robust Optimization under Concept Shift},
  author =       {Dong, Yijun and Xie, Yuege and Ward, Rachel},
  booktitle = 	icml,
  year = 	 {2023}
}

@article{chen2024comprehensive,
  title={A comprehensive survey for generative data augmentation},
  author={Chen, Yunhao and Yan, Zihui and Zhu, Yunjie},
  journal={Neurocomputing},
  volume={600},
  pages={128167},
  year={2024},
  publisher={Elsevier}
}

@article{li2022data,
  title={Data augmentation approaches in natural language processing: A survey},
  author={Li, Bohan and Hou, Yutai and Che, Wanxiang},
  journal={Ai Open},
  volume={3},
  pages={71--90},
  year={2022},
  publisher={Elsevier}
}

@article{pellicer2023data,
  title={Data augmentation techniques in natural language processing},
  author={Pellicer, Lucas Francisco Amaral Orosco and Ferreira, Taynan Maier and Costa, Anna Helena Reali},
  journal={Applied Soft Computing},
  volume={132},
  pages={109803},
  year={2023},
  publisher={Elsevier}
}

@inproceedings{yang2023sample,
  title={Sample efficiency of data augmentation consistency regularization},
  author={Yang, Shuo and Dong, Yijun and Ward, Rachel and Dhillon, Inderjit S and Sanghavi, Sujay and Lei, Qi},
  booktitle=aistats,
  year={2023}
}

@inproceedings{
soleymani2025from,
title={From Finite to Infinite Groups: A Polynomial-Time Algorithm for Learning with Exact Invariances},
author={Ashkan Soleymani and Behrooz Tahmasebi and Patrick Jaillet and Stefanie Jegelka},
booktitle={NeurIPS 2025 Workshop on Symmetry and Geometry in Neural Representations},
year={2025}
}

@inproceedings{soleymani2025robust,
  title={A Robust Kernel Statistical Test of Invariance: Detecting Subtle Asymmetries},
  author={Soleymani, Ashkan and Tahmasebi, Behrooz and Jegelka, Stefanie and Jaillet, Patrick},
  booktitle=aistats,
  year={2025}
}

@inproceedings{soleymani2025learning,
  title={Learning with Exact Invariances in Polynomial Time},
  author={Soleymani, Ashkan and Tahmasebi, Behrooz and Jegelka, Stefanie and Jaillet, Patrick},
  booktitle=icml,
  year={2025}
}

@article{tropp2012user,
  title={User-friendly tail bounds for sums of random matrices},
  author={Tropp, Joel A},
  journal={Foundations of computational mathematics},
  volume={12},
  number={4},
  pages={389--434},
  year={2012},
  publisher={Springer}
}

@article{bourgain2008uniform,
  title={Uniform expansion bounds for Cayley graphs of $\mathrm{SL}_2(\mathbb{F}_p)$},
  author={Bourgain, Jean and Gamburd, Alex},
  journal={Annals of Mathematics},
  pages={625--642},
  year={2008},
  publisher={JSTOR}
}

@article{batzner20223,
  title={E(3)-equivariant graph neural networks for data-efficient and accurate interatomic potentials},
  author={Batzner, Simon and Musaelian, Albert and Sun, Lixin and Geiger, Mario and Mailoa, Jonathan P and Kornbluth, Mordechai and Molinari, Nicola and Smidt, Tess E and Kozinsky, Boris},
  journal={Nature communications},
  volume={13},
  number={1},
  pages={2453},
  year={2022},
  publisher={Nature Publishing Group UK London}
}

@article{bronstein2017geometric,
  title={Geometric deep learning: going beyond euclidean data},
  author={Bronstein, Michael M and Bruna, Joan and LeCun, Yann and Szlam, Arthur and Vandergheynst, Pierre},
  journal={IEEE Signal Processing Magazine},
  volume={34},
  number={4},
  pages={18--42},
  year={2017},
  publisher={IEEE}
}

@article{smidt2021euclidean,
  title={Euclidean symmetry and equivariance in machine learning},
  author={Smidt, Tess E},
  journal={Trends in Chemistry},
  volume={3},
  number={2},
  pages={82--85},
  year={2021},
  publisher={Elsevier}
}

@inproceedings{dymequivariant,
  title={Equivariant Frames and the Impossibility of Continuous Canonicalization},
  author={Dym, Nadav and Lawrence, Hannah and Siegel, Jonathan W},
  booktitle=icml,
  year={2024}
}

@inproceedings{finzi2021residual,
  title={Residual pathway priors for soft equivariance constraints},
  author={Finzi, Marc and Benton, Gregory and Wilson, Andrew G},
  booktitle=neurips,
  year={2021}
}

@inproceedings{romero2022learning,
  title={Learning partial equivariances from data},
  author={Romero, David W and Lohit, Suhas},
  booktitle=neurips,
  year={2022}
}

@inproceedings{van2022relaxing,
  title={Relaxing equivariance constraints with non-stationary continuous filters},
  author={van der Ouderaa, Tycho and Romero, David W and van der Wilk, Mark},
  booktitle=neurips,
  year={2022}
}

@inproceedings{kim2023regularizing,
  title={Regularizing towards soft equivariance under mixed symmetries},
  author={Kim, Hyunsu and Lee, Hyungi and Yang, Hongseok and Lee, Juho},
  booktitle=icml,
  year={2023}
}

@InProceedings{pmlr-v258-park25d,
  title = 	 {Approximate Equivariance in Reinforcement Learning},
  author =       {Park, Jung Yeon and Bhatt, Sujay and Zeng, Sihan and Wong, Lawson L.S. and Koppel, Alec and Ganesh, Sumitra and Walters, Robin},
  booktitle = 	aistats, 
  year = 	 {2025}
}

@inproceedings{wang2022approximately,
  title={Approximately equivariant networks for imperfectly symmetric dynamics},
  author={Wang, Rui and Walters, Robin and Yu, Rose},
  booktitle=icml,
  year={2022}
}

@inproceedings{yang2024latent,
  title={Latent Space Symmetry Discovery},
  author={Yang, Jianke and Dehmamy, Nima and Walters, Robin and Yu, Rose},
  booktitle=icml,
  year={2024}
}

@inproceedings{yang2023generative,
  title={Generative adversarial symmetry discovery},
  author={Yang, Jianke and Walters, Robin and Dehmamy, Nima and Yu, Rose},
  booktitle=icml,
  year={2023}
}

@inproceedings{van2023learning,
  title={Learning layer-wise equivariances automatically using gradients},
  author={van der Ouderaa, Tycho and Immer, Alexander and van der Wilk, Mark},
  booktitle=neurips,
  year={2023}
}

@inproceedings{
huh2025a,
title={Discovering Group Structures via Unitary Representation Learning},
author={Dongsung Huh},
booktitle=iclr,
year={2025}
}

@article{desai2022symmetry,
  title={Symmetry discovery with deep learning},
  author={Desai, Krish and Nachman, Benjamin and Thaler, Jesse},
  journal={Physical Review D},
  volume={105},
  number={9},
  pages={096031},
  year={2022},
  publisher={APS}
}

@inproceedings{shaw2024symmetry,
  title={Symmetry discovery beyond affine transformations},
  author={Shaw, Ben and Magner, Abram and Moon, Kevin},
  booktitle=neurips,
  year={2024}
}

@inproceedings{shumaylov2025lie,
  title={Lie algebra canonicalization: Equivariant neural operators under arbitrary lie groups},
  author={Shumaylov, Zakhar and Zaika, Peter and Rowbottom, James and Sherry, Ferdia and Weber, Melanie and Sch{\"o}nlieb, Carola-Bibiane},
  booktitle=iclr,
  year={2025}
}

@article{batzner2023advancing,
  title={Advancing molecular simulation with equivariant interatomic potentials},
  author={Batzner, Simon and Musaelian, Albert and Kozinsky, Boris},
  journal={Nature Reviews Physics},
  volume={5},
  number={8},
  pages={437--438},
  year={2023},
  publisher={Nature Publishing Group UK London}
}

@book{dai2013approximation,
  title={Approximation theory and harmonic analysis on spheres and balls},
  author={Dai, Feng},
  year={2013},
  publisher={Springer}
}

@article{diaz2025invariant,
  title={Invariant kernels: Rank stabilization and generalization across dimensions},
  author={D{\'\i}az, Mateo and Drusvyatskiy, Dmitriy and Kendrick, Jack and Thomas, Rekha R},
  journal={arXiv preprint arXiv:2502.01886},
  year={2025}
}

@article{weber2025geometric,
author = {Weber, Melanie},
title = {Geometric Machine Learning},
journal = {AI Magazine},
volume = {46},
number = {1},
pages = {e12210},
year = {2025}
}

@inproceedings{kiani2024hardness,
  title={On the hardness of learning under symmetries},
  author={Kiani, Bobak and Le, Thien and Lawrence, Hannah and Jegelka, Stefanie and Weber, Melanie},
  booktitle={International Conference on Learning Representations},
  volume={2024},
  pages={25956--26003},
  year={2024}
}

@article{pacini2025universality,
  title={On universality of deep equivariant networks},
  author={Pacini, Marco and Petrache, Mircea and Lepri, Bruno and Trivedi, Shubhendu and Walters, Robin},
  journal={arXiv preprint arXiv:2510.15814},
  year={2025}
}

@inproceedings{tahmasebi2025achieving,
  title={Achieving Approximate Symmetry Is Exponentially Easier than Exact Symmetry},
  author={Tahmasebi, Behrooz and Weber, Melanie},
  booktitle=iclr,
  year={2026}
}

@inproceedings{bietti2021sample,
  title={On the sample complexity of learning under geometric stability},
  author={Bietti, Alberto and Venturi, Luca and Bruna, Joan},
  booktitle= neurips,
  year={2021}
}

@inproceedings{tahmasebi2026adaptive,
title = {Adaptive Symmetry Discovery for Dynamical System Identification},
author = {Tahmasebi, Behrooz and Weber, Melanie},
booktitle = icml,
year = {2026}
}

@inproceedings{dehmamy2021automatic,
  title={Automatic symmetry discovery with lie algebra convolutional network},
  author={Dehmamy, Nima and Walters, Robin and Liu, Yanchen and Wang, Dashun and Yu, Rose},
  booktitle=neurips,
  year={2021}
}

 \clearpage
\appendix

\section{Preliminaries}\label{apd:bck}

\subsection{Spherical Harmonics}\label{apd:bck:sh}

We briefly review basic facts about spherical harmonics on the unit sphere \citep{dai2013approximation}.
Let
\[
\mathbb S^{d-1} := \left\{ x \in \mathbb R^d : \|x\|_2 = 1 \right\}
\]
be the unit sphere in $\mathbb R^d$, equipped with the normalized surface measure $\mu$,
so that $\int_{\mathbb S^{d-1}} d\mu(x) = 1$.
For functions $f,h : \mathbb S^{d-1} \to \mathbb R$, we define the $L^2$ inner product
\[
\langle f,h\rangle_{L^2(\mathbb S^{d-1})}
:= \int_{\mathbb S^{d-1}} f(x)\, h(x)\, d\mu(x),
\]
and denote by $L^2(\mathbb S^{d-1})$ the associated Hilbert space.

The (Euclidean) Laplacian on $\mathbb R^d$ is defined by
\[
\Delta := \sum_{i=1}^d \frac{\partial^2}{\partial x_i^2}.
\]
In contrast, the Laplace--Beltrami operator $\Delta_{\mathbb S^{d-1}}$ is the intrinsic
Laplacian associated with the Riemannian metric on the sphere.
While $\Delta$ acts on functions defined in $\mathbb R^d$, $\Delta_{\mathbb S^{d-1}}$
acts on functions defined on $\mathbb S^{d-1}$ and can be viewed as the restriction of
$\Delta$ to tangential directions along the sphere.

Let $-\Delta_{\mathbb S^{d-1}}$ denote the (positive semidefinite) Laplace--Beltrami
operator on $\mathbb S^{d-1}$.
Its spectrum is discrete and indexed by $\ell = 0,1,2,\ldots$, with eigenvalues
\[
\lambda_\ell := \ell(\ell + d - 2).
\]

\paragraph{Spherical harmonics.}
The eigenspace corresponding to $\lambda_\ell$ is denoted by $\mathcal H_\ell$ and consists
of the restrictions to $\mathbb S^{d-1}$ of homogeneous harmonic polynomials of degree $\ell$
in $\mathbb R^d$, namely
\[
\mathcal H_\ell
:= \Big\{ p|_{\mathbb S^{d-1}} :
p \text{ is homogeneous of degree } \ell
\text{ and } \Delta p = 0 \text{ in } \mathbb R^d \Big\}.
\]
The spaces $\{\mathcal H_\ell\}_{\ell \ge 0}$ are mutually orthogonal in
$L^2(\mathbb S^{d-1})$ and yield the orthogonal decomposition
\[
L^2(\mathbb S^{d-1})
= \bigoplus_{\ell=0}^\infty \mathcal H_\ell .
\]
Let us denote the multiplicity of the eigenvalue $\lambda_\ell$ as
\[
N_\ell := \dim(\mathcal H_\ell)
= \binom{d+\ell-1}{\ell} - \binom{d+\ell-3}{\ell-2}
\]
For each $\ell$, let $\{\phi_{\ell,m}\}_{m=1}^{N_\ell}$ be an orthonormal basis of
$\mathcal H_\ell$.

\paragraph{Asymptotics of the multiplicities.}
For fixed degree $\ell$ and dimension $d \to \infty$, the multiplicity $N_\ell$ satisfies
\[
N_\ell
=  \frac{d^\ell}{\ell!} +  O_{\ell}\!\left(d^{\ell-1}\right).
\]
where the implicit constant depends only on $\ell$.
In particular, $N_\ell$ grows polynomially in $d$ with leading order $d^{\ell}$. Moreover, the dimension of the space of spherical harmonics of degree at most $k$ satisfies
\[
 \sum_{\ell=0}^k N_\ell = \binom{d+k-1}{k} + \binom{d+k-2}{k-1},
\]
In particular, for fixed $k$ and $d\to\infty$,
\[ \sum_{\ell=0}^k N_\ell =  \frac{d^k}{k!} +O_{k}\!\left(d^{k-1}\right),
\]
where the implicit constant depends only on $k$.

\paragraph{Harmonic expansion.}
Every function $f \in L^2(\mathbb S^{d-1})$ admits the convergent expansion
\[
f
= \sum_{\ell=0}^\infty \sum_{m=1}^{N_\ell}
f_{\ell,m}\,\phi_{\ell,m},
\qquad
f_{\ell,m}
:= \langle f,\phi_{\ell,m}\rangle_{L^2(\mathbb S^{d-1})}.
\]
Truncating the expansion at degrees $\ell \le k$ yields the $L^2$-orthogonal projection
of $f$ onto the space of spherical polynomials of degree at most $k$, and provides the
best $L^2$ approximation of $f$ within this class.

\subsection{Gegenbauer Polynomials, Zonal Kernels, and Projection Kernels}\label{apd:bck:sh2}

A \emph{zonal}  function $Z:\mathbb S^{d-1}\times \mathbb S^{d-1}\to\mathbb R$ is a function that only depends on the inner product, i.e.,
\[
Z(x,y)=\zeta(\langle x,y\rangle), \qquad x,y\in \mathbb S^{d-1},
\]
for some $\zeta:[-1,1]\to\mathbb R$.

\paragraph{Gegenbauer polynomials.}
Let $\lambda := \frac{d-2}{2}$. The Gegenbauer polynomials
$\{C_\ell^{(\lambda)}\}_{\ell\ge 0}$ form an orthogonal family on $[-1,1]$ with respect
to the weight $(1-t^2)^{\lambda-\frac12}$.  

\paragraph{Zonal harmonics (reproducing kernels).}
For each $\ell\ge 0$, let $\mathcal H_\ell$ be the eigenspace of the Laplace--Beltrami
operator $-\Delta_{\mathbb S^{d-1}}$ with eigenvalue $\lambda_\ell=\ell(\ell+d-2)$, and
let $\{\phi_{\ell,m}\}_{m=1}^{N_\ell}$ be an orthonormal basis of $\mathcal H_\ell$ in
$L^2(\mathbb S^{d-1})$. Define the \emph{zonal harmonic} (also called the reproducing
kernel of $\mathcal H_\ell$) by
\[
Z_\ell(x,y) := \sum_{m=1}^{N_\ell} \phi_{\ell,m}(x)\phi_{\ell,m}(y).
\]
Note that $Z_\ell(x,y)$ is a function of
$\langle x,y\rangle$ and can be written explicitly as
\[
Z_\ell(x,y) = \frac{\ell+\lambda}{\lambda}\, C_\ell^{(\lambda)}(\langle x,y\rangle),
\qquad \lambda=\frac{d-2}{2}.
\]
In particular, $Z_\ell(x,x)=N_\ell$ is constant in $x$.

\paragraph{Projection kernel to degree $\le k$.}
Let $\Pi_{\le k}$ denote the $L^2(\mathbb S^{d-1})$-orthogonal projector onto the space
$\bigoplus_{\ell=0}^k \mathcal H_\ell$ of spherical polynomials of degree at most $k$.
Then $\Pi_{\le k}$ is an integral operator with kernel
\[
\Pi_{\le k}(x,y) := \sum_{\ell=0}^k Z_\ell(x,y)
= \sum_{\ell=0}^k \frac{\ell+\lambda}{\lambda}\, C_\ell^{(\lambda)}(\langle x,y\rangle),
\]
so that
\[
(\Pi_{\le k} f)(x) = \int_{\mathbb S^{d-1}} \Pi_{\le k}(x,y)\, f(y)\, d\sigma(y).
\]
Hence, the projection depends only on inner products and can be evaluated efficiently:
for fixed $x$, computing $\Pi_{\le k}(x,y)$ reduces to evaluating
$C_\ell^{(\lambda)}(t)$ for $t=\langle x,y\rangle$ and $\ell=0,\dots,k$, which can be
done in $O(k)$ time using Gegenbauer polynomials.

\subsection{Groups}
\label{app:groups}

A \emph{group} is a pair $(G,\cdot)$ where $G$ is a set and $\cdot: G\times G\to G$
is a binary operation (often written as multiplication) such that:
\begin{itemize}
  \item \textbf{(Associativity)} $(g\cdot h)\cdot k = g\cdot (h\cdot k)$ for all $g,h,k\in G$.
  \item \textbf{(Identity)} There exists an element $e\in G$ such that $e\cdot g = g\cdot e = g$
  for all $g\in G$.
  \item \textbf{(Inverse)} For every $g\in G$ there exists $g^{-1}\in G$ such that
  $g\cdot g^{-1} = g^{-1}\cdot g = e$.
\end{itemize}
Throughout the paper, we omit the ``$\cdot$'' notation and write $gh$ for the group product for all $g,h\in G$.

Here we list a number of example groups.

\begin{itemize}
  \item \textbf{Permutation group.}
  The symmetric group $S_d$ is the set of all bijections $\pi:\{1,\dots,d\}\to\{1,\dots,d\}$
  with group operation given by composition. Equivalently, each $\pi\in S_d$ is a permutation
  of the indices $\{1,\dots,d\}$.

   \item \textbf{Sign-flipping group.}
The sign-flipping group consists of all vectors $\varepsilon=(\varepsilon_1,\dots,\varepsilon_d)$
with $\varepsilon_i\in\{+1,-1\}$, equipped with componentwise multiplication.

  \item \textbf{Hyperoctahedral group (signed permutations).}
  The hyperoctahedral group $B_d$ is the group of all \emph{signed permutation matrices}
  in $\mathbb{R}^{d\times d}$, i.e., matrices $Q$ such that each row and each column
  contains exactly one nonzero entry, and every nonzero entry equals $+1$ or $-1$.
  Equivalently, each element of $B_d$ can be specified by a pair $(\pi,\varepsilon)$ where
  $\pi\in S_d$ and $\varepsilon\in\{\pm 1\}^d$.
  This group models invariance under the reordering of coordinates, together with independent
  sign flips.

  \item \textbf{Cyclic group.}
  The cyclic group of order $d$, denoted $C_d$, is generated by a single element $r$ with
  relation $r^d=e$:
  \[
  C_d := \{e, r, r^2,\dots,r^{d-1}\}, \qquad r^a r^b = r^{a+b \ (\mathrm{mod}\ d)}.
  \]
  A canonical example is the group of circular shifts on $d$ coordinates.

  \item \textbf{Dihedral group.}
  The dihedral group of order $2d$, denoted $D_d$, is the symmetry group of a regular $d$-gon.
  It contains $d$ rotations and $d$ reflections, and can be presented by generators
  $r$ (a rotation by $2\pi/d$) and $s$ (a reflection) satisfying
  \[
  r^d = e, \qquad s^2 = e, \qquad srs = r^{-1}.
  \]
  Each element of $D_d$ can be written uniquely as either $r^k$ or $sr^k$ for some
  $k\in\{0,1,\dots,d-1\}$. In applications on $d$-tuples, one may realize $r$ as a cyclic
  shift and $s$ as reversal (a ``flip''), generating both shifts and flip-symmetries.
\end{itemize}

We also frequently encounter infinite groups:
\begin{itemize}
  \item \textbf{Orthogonal group.} The orthogonal group $\mathrm{O}(d)$ is
  \[
  \mathrm{O}(d) := \{Q\in\mathbb{R}^{d\times d} : Q^\top Q = I_d\},
  \]
  i.e., the set of linear maps of $\mathbb{R}^d$ that preserve Euclidean inner products
  (and hence distances). These include rotations and reflections.

  \item \textbf{Special orthogonal group.} The special orthogonal group $\mathrm{SO}(d)$ is the subgroup
  \[
  \mathrm{SO}(d) := \{Q\in O(d) : \det(Q)=1\},
  \]
  consisting of proper rotations (orientation-preserving orthogonal transformations).

    \item \textbf{Unitary group.}
The unitary group $\mathrm{U}(d)$ is
\[
\mathrm{U}(d) := \{U\in\mathbb{C}^{d\times d} : U^* U = I_d\},
\]
where $U^*$ denotes the conjugate transpose of $U$. Equivalently, $\mathrm{U}(d)$ consists of
all linear maps of $\mathbb{C}^d$ that preserve the complex inner product.

  \item \textbf{Translation group.} The translation group $(\mathbb{R}^d,+)$ acts on
  $\mathbb{R}^d$ by $t\cdot x = x+t$. This models translation invariance in Euclidean spaces.
\end{itemize}

For compact groups $G$, there exists a unique probability measure on $G$,
called the \emph{Haar measure}, which is invariant under left group translations.
Sampling $g\in G$ therefore corresponds to sampling a group element uniformly at
random throughout the paper. We write $\mathbb{E}_g[\cdot]$ to denote expectation with respect to uniform sampling
from the group.

\subsection{Group Actions}

Let $(\mathcal X,\mu)$ be a measured space. A (left) \emph{group action} of a group $G$ on
$\mathcal X$ is a map $(g,x)\mapsto g x$ satisfying
\[
    e x = x
    \quad\text{and}\quad
    (g_1 g_2)x = g_1(g_2 x),
    \qquad
    \forall g_1,g_2\in G,\ \forall x\in\mathcal X .
\]
We assume that the action is \emph{isometric}, i.e.,
$\mu(gA)=\mu(A)$ for all measurable $A\subseteq\mathcal X$ and $g\in G$.

\paragraph{Lifted action on function spaces.}
Let $\mathcal{F} \subset L^2(\mathcal{X},\mu)$ be a finite-dimensional vector space of
continuous functions. Let $\{\phi_\ell\}_{\ell=1}^r$ be a fixed orthonormal basis of
$\mathcal{F}$,  where
$r \coloneqq \dim(\mathcal{F})$. Therefore, we may identify $\mathcal{F}$ with $\mathbb{R}^r$. With a slight abuse of
notation, each function $f\in\mathcal{F}$ is identified with a vector $f\in\mathbb{R}^r$.

We say that $\mathcal{F}$ is \emph{closed} under the group action if, for any $f\in\mathcal{F}$,
we have $x \mapsto f(gx) \in \mathcal{F}$ for all $g\in G$. Throughout, we assume that $\mathcal{F}$ is closed under the group action.

The action of $G$ on $\mathcal X$ induces
a natural action on $\mathcal F$ defined by
\[
(T_g f)(x) := f(g^{-1} x),
\qquad \forall g\in G,
\]
where $T_g : \mathcal{F} \to \mathcal{F}$ are unitary operators (matrices), due to the isometry
assumption. When $\mathcal{F}$ is finite-dimensional, we can identify each $T_g$ with a matrix
of dimension $\dim(\mathcal{F})$, and thus $T_g \in \mathbb{R}^{r\times r}$ for all $g\in G$,
where $r \coloneqq \dim(\mathcal{F})$. In particular, $T_e = I_r$, where $e\in G$ denotes the
identity element and $I_r \in \mathbb{R}^{r\times r}$ is the identity matrix, and we have $T_gT_h = T_{gh}$ for all $g,h\in G$.

\paragraph{Invariant functions and projection.}
The $G$-invariant subspace of $\mathcal F$ is
\[
\mathcal F^G := \{ f\in \mathcal F : T_g f = f \;\; \forall g\in G\}
= \bigcap_{g\in G} \ker(T_g - I).
\]
We identify $\mathcal{F}^G \subseteq \mathcal{F}$ with a finite-dimensional vector subspace
of $\mathbb{R}^r$, where $r_{\operatorname{inv}} \coloneqq \dim(\mathcal{F}^G)$.
For compact groups $G$, the orthogonal projection onto the invariant subspace
$\mathcal{F}^G$ is given by
\[
\Pi_G f \;\coloneqq\; \mathbb{E}_{g}\!\left[\,T_g f\,\right].
\]

\subsection{Group Representations}\label{app:group_rep}

A (unitary) representation of a group $G$ on a finite-dimensional Hilbert space $\mathcal{H}$ is a map
\[
\rho : G \to \mathcal{U}(\mathcal{H}),
\]
from $G$ to the group of unitary operators on $\mathcal{H}$, such that
\[
\rho(e) = I_{\mathcal{H}}
\qquad\text{and}\qquad
\rho(gh) = \rho(g)\rho(h)
\quad \text{for all } g,h\in G.
\]
Indeed, the group multiplication in $G$ is represented by the composition of
unitary operators on $\mathcal{H}$.

A subspace $W \subseteq \mathcal{H}$ is said to be \emph{$G$-invariant} if
\[
\rho(g) w \in W
\quad \text{for all } w\in W \text{ and all } g\in G,
\]
that is, the action of the group does not move vectors in $W$ outside of $W$.

A representation $\rho$ is called \emph{irreducible} if the only closed
$G$-invariant subspaces of $\mathcal{H}$ are the trivial ones, namely
$\{0\}$ and $\mathcal{H}$ itself. Otherwise, the representation is said to be
\emph{reducible}.

\paragraph{Finite groups and harmonic decompositions.}
For finite groups, every finite-dimensional unitary representation decomposes
orthogonally into irreducible components:
\[
\mathcal{H} \;\cong\; \bigoplus_{\lambda\in\widehat G}
\mathbb{C}^{m_\lambda}\otimes V_\lambda,
\]
where $\widehat G$ denotes the set of inequivalent irreducible representations,
$V_\lambda$ is an irreducible representation of dimension $d_\lambda \in \mathbb{N}$,
and $m_\lambda \in \mathbb{Z}_{\ge 0}$ is its multiplicity.
The number of irreducible representations equals the number of conjugacy classes
of $G$, and their dimensions satisfy
\[
\sum_{\lambda \in \widehat{G}} d_\lambda^2 = |G|.
\]
In particular, $|\widehat{G}| \le |G|$ for all finite groups $G$.

This decomposition is the finite-group analogue of the harmonic decompositions
appearing in classical settings (e.g., spherical harmonics), and induces an
orthogonal decomposition of $\mathcal{F}$ into \emph{isotypic components}.

\paragraph{Matrix structure under a change of coordinates.}
Let $\rho : G \to \mathcal{U}(\mathcal{H})$ be a finite-dimensional unitary
representation. There exists an orthonormal basis of $\mathcal{H}$ under which
each operator $\rho(g)$ takes a block-diagonal form consistent with the above
decomposition:
\[
\rho(g)
\;\cong\;
\bigoplus_{\lambda\in\widehat G}
I_{m_\lambda} \otimes \rho_\lambda(g),
\qquad g\in G,
\]
where $\rho_\lambda : G \to \mathcal{U}(V_\lambda)$ denotes an irreducible
representation of dimension $d_\lambda$, and $I_{m_\lambda}$ is the identity
operator on the multiplicity space $\mathbb{C}^{m_\lambda}$.

In this basis, each irreducible representation appears $m_\lambda$ times, and the
action of $G$ is identical across these copies. This block structure plays a
central role in characterizing invariant subspaces and projection operators in
finite-dimensional models.

\paragraph{Invariant subspaces in finite-dimensional models.}

Specializing the above discussion to $\mathcal{H}=\mathcal{F}$, we note that each
operator $T_g$ is unitary on $\mathcal{F}$, and the map $g\mapsto T_g$ defines a
finite-dimensional unitary representation
$
\rho : G \to U(r)
$
on $\mathcal{F}$ (or, equivalently, on $\mathbb{R}^r$).

The \emph{trivial irreducible representation} of $G$ is the one-dimensional
representation in which every group element acts as the identity, i.e.,
$\rho_{\mathrm{triv}}(g)=1 \in \mathbb{R}$ for all $g\in G$.
Functions lying in the trivial isotypic component are exactly those that are
fixed by the action of $G$, and therefore correspond precisely to
$G$-invariant functions $\mathcal{F}^G$, and can be obtained via orthogonal
projection.

\subsection{Projection Estimators}
\label{app:projection-estimators}

In this subsection, we review projection-based estimators for density estimation
and regression in a general finite-dimensional Hilbert space setting. These estimators form the
conceptual basis for the spherical harmonics constructions discussed later.

To review, let $(\mathcal{X},\mu)$ be a measured space and let
$\mathcal{F} \subset L^2(\mathcal{X},\mu)$ be a finite-dimensional linear subspace
of dimension $r$.
Note that every function $f\in \mathcal{F}$ admits the expansion
\[
f = \sum_{\ell=1}^r \theta_\ell \phi_\ell,
\qquad
\theta_\ell \coloneqq \langle f, \phi_\ell \rangle.
\]

\paragraph{Projection density estimation (unlabeled data).}
Let $x_1,\dots,x_n$ be i.i.d.\ samples drawn from an unknown density
$f^\star \in L^2(\mathcal X,\mu)$.
The $L^2(\mathcal{X})$-orthogonal projection of $f^\star$ onto $\mathcal{F}$ is
\[
\Pi_{\mathcal{F}} f^\star
:= \arg\min_{f\in\mathcal{F}} \|f^\star-f\|_{L^2(\mathcal{X})}^2
= \sum_{\ell=1}^r \theta_\ell \phi_\ell,
\qquad
\theta_\ell = \mathbb{E}_{x\sim f^\star}[\phi_\ell(x)].
\]
Since the coefficients $\theta_\ell$ are expectations, they admit unbiased
empirical estimators
\[
\widehat{\theta}_\ell := \frac{1}{n}\sum_{i=1}^n \phi_\ell(x_i).
\]
This leads to the \emph{projection density estimator}
\[
\widehat{f} := \sum_{\ell=1}^r \widehat{\theta}_\ell \phi_\ell.
\]

\paragraph{Projection regression estimation (labeled data).}
Now consider a supervised setting where $(x_i,y_i)_{i=1}^n$ are i.i.d.\ samples
from a joint distribution on $\mathcal X\times\mathbb R$, where
$x_i \sim \mu$ is drawn uniformly from $\mathcal X$, and
\[
y_i = f^\star(x_i) + \varepsilon_i,
\qquad
\mathbb{E}[\varepsilon_i]=0,
\qquad
\mathbb{E}[\varepsilon_i^2]=\sigma^2,
\]
with $\varepsilon_1,\dots,\varepsilon_n$ independent of
$x_1,\dots,x_n$.

The regression function $f^\star(x)=\mathbb{E}[y |x]$ lies in
$L^2(\mathcal X,\mu)$, and its $L^2(\mathcal{X})$-orthogonal projection onto $\mathcal{F}$ is given by
\[
\Pi_{\mathcal{F}} f^\star
:= \arg\min_{f\in\mathcal{F}} \mathbb{E}\big[(f(x)-f^\star(x))^2\big]
= \sum_{\ell=1}^r \beta_\ell \phi_\ell,
\qquad
\beta_\ell = \mathbb{E}_{x,y}[y\,\phi_\ell(x)].
\]
The coefficients $\beta_\ell$ admit empirical estimators
\[
\widehat{\beta}_\ell := \frac{1}{n}\sum_{i=1}^n y_i\,\phi_\ell(x_i),
\]
leading to the \emph{projection regression estimator}
\[
\widehat{f} := \sum_{\ell=1}^r \widehat{\beta}_\ell \phi_\ell.
\]

\paragraph{Relation to the method of moments.}
Both the projection density estimator and the projection regression estimator can
be interpreted as instances of the \emph{method of moments}. In the density
estimation setting, the population projection coefficients satisfy
\[
\theta_\ell \;=\; \mathbb{E}\big[\phi_\ell(x)\big],
\qquad \ell\in[r],
\]
and the empirical coefficients
$\widehat{\theta}_\ell = \frac{1}{n}\sum_{i=1}^n \phi_\ell(x_i)$ are obtained by
matching these moments. The resulting estimator
$\widehat{f}=\sum_{\ell=1}^r \widehat{\theta}_\ell\phi_\ell$ is therefore a
moment-based approximation of the $L^2(\mathcal{X})$-projection of $f^\star$ onto
$\mathcal{F}$.

Similarly, in the regression setting, the population projection coefficients
satisfy the moment identities
\[
\beta_\ell \;=\; \mathbb{E}\big[y\,\phi_\ell(x)\big],
\qquad \ell\in[r],
\]
and are estimated by their empirical counterparts
$\widehat{\beta}_\ell = \frac{1}{n}\sum_{i=1}^n y_i\,\phi_\ell(x_i)$. In both cases,
the estimators are obtained by matching population moments with empirical moments
and substituting the resulting coefficients into the basis expansion.

\subsection{Projection Estimators on  Sphere}
\label{apd:bck:sh3}

We now specialize the projection estimators to the case
where the input space is the unit sphere and the approximation space is chosen
according to spherical harmonics. This specialization yields classical spectral
estimators and admits efficient kernel representations via zonal harmonics.

\paragraph{Spherical harmonics and approximation spaces.}
Let $\mathcal X = \mathbb{S}^{d-1}$ be the unit sphere equipped with the uniform
probability measure $\mu$.
For each $\ell\ge 0$, let $\mathcal H_\ell$ be the eigenspace of spherical harmonics with degree $\ell$, and
let $\{\phi_{\ell,m}\}_{m=1}^{N_\ell}$ be an orthonormal basis of $\mathcal H_\ell$ in
$L^2(\mathbb S^{d-1})$, where $N_\ell$ is the dimension of this space.

For a fixed truncation level $k \in \mathbb{Z}_{\ge 0}$, we define the finite-dimensional
approximation space
\[
\mathcal{F}
\;:=\;
\bigoplus_{\ell=0}^k \mathcal{H}_\ell, \qquad  r \coloneqq \dim(\mathcal{F})=\; \sum_{\ell=0}^k N_\ell.
\] 
Note that, in contrast to the general setting of the previous subsection, where basis
functions were indexed by a single index $\ell\in[r]$, here each basis function
is indexed by a degree $\ell$ and a multiplicity index $m$.

\paragraph{Projection estimators in spherical harmonics coordinates.}
Any function $f\in\mathcal{F}$ admits the expansion
\[
f(x)
\;=\;
\sum_{\ell=0}^k \sum_{m=1}^{N_\ell} \theta_{\ell,m}\,\phi_{\ell,m}(x),
\qquad
\theta_{\ell,m} = \langle f, \phi_{\ell,m} \rangle.
\]
In the density estimation setting, where $x_1,\dots,x_n$ are i.i.d.\ samples from
an unknown density $f^\star$ on $\mathbb{S}^{d-1}$, the population projection
coefficients satisfy
\[
\theta_{\ell,m}
\;=\;
\mathbb{E}_{x\sim f^\star}\big[\phi_{\ell,m}(x)\big] \implies \widehat{\theta}_{\ell,m}
\;=\;
\frac{1}{n}\sum_{i=1}^n \phi_{\ell,m}(x_i).
\]
The resulting projection estimator is
\[
\widehat{f}(x)
\;=\;
\sum_{\ell=0}^k \sum_{m=1}^{N_\ell} \widehat{\theta}_{\ell,m}\,\phi_{\ell,m}(x).
\]

An analogous expression holds in the regression setting, with coefficients
$\theta_{\ell,m}$ replaced by $\beta_{\ell,m}=\mathbb{E}[y\,\phi_{\ell,m}(x)]$.

\paragraph{Projection kernels and zonal harmonics.}
Rather than working explicitly with the basis $\{\phi_{\ell,m}\}$, it is often
convenient to express the projection estimator using the associated
\emph{projection kernel}
\[
\Pi_{\le k}(x,x')
\;:=\;
\sum_{\ell=0}^k \sum_{m=1}^{N_\ell} \phi_{\ell,m}(x)\,\phi_{\ell,m}(x').
\]
With this notation, the projection estimator admits the kernel form
\[
\widehat{f}(x)
\;=\;
\frac{1}{n}\sum_{i=1}^n \Pi_{\le k}(x,x_i)
\quad\text{(density estimation)},
\quad
\widehat{f}(x)
\;=\;
\frac{1}{n}\sum_{i=1}^n y_i\,\Pi_{\le k}(x,x_i)
\quad\text{(regression)}.
\]

The kernel $\Pi_{\le k}(x,x')$ is a \emph{zonal kernel}, meaning that it depends on $x$ and
$x'$ only through their inner product $\langle x,x'\rangle$.

\paragraph{Gegenbauer polynomial representation.}
By the addition theorem for spherical harmonics \citep{dai2013approximation}, the projection kernel admits the
explicit representation
\[
\Pi_k(x,x')
\;=\;
 \sum_{\ell=0}^k \frac{\ell+\lambda}{\lambda}\, C_\ell^{(\lambda)}(\langle x,x'\rangle),
\qquad
\lambda = \frac{d-2}{2},
\]
where $C_\ell^{(\lambda)}$ denotes the Gegenbauer polynomial of degree $\ell$.
As a result, both density and regression projection estimators can be evaluated
efficiently using only inner products between data points, without explicitly
computing the spherical harmonics basis functions.

Indeed, Gegenbauer polynomials admit a \emph{three-term recurrence}, meaning that each
polynomial $C_{\ell+1}^{(\lambda)}(t)$ can be computed using only the two preceding
values $C_{\ell}^{(\lambda)}(t)$ and $C_{\ell-1}^{(\lambda)}(t)$.
Consequently, for a fixed $t\in[-1,1]$, the sequence
$\{C_\ell^{(\lambda)}(t)\}_{\ell=0}^k$ can be evaluated iteratively, storing only
two intermediate values at any time. This yields an $O(k)$ time complexity and
$O(1)$ memory usage for computing all degrees up to $k$.

We conclude with an important implication. Although projection estimators are
computationally tractable for polynomial features of moderate degree $k$ growing
polynomially with the dimension $d$, exact symmetry through data augmentation
may still require augmentation over the full group. In other words, averaging
or data augmentation does not, in general, alleviate the computational cost of
projecting onto invariant subspaces, even for moderate-degree polynomial
features. The next proposition shows explicitly that the condition in
Theorem~\ref{thm:exact-invariance-forces-full-group} is already satisfied when
$k=O(d^2)$.

\begin{proposition}[Permutation irreps in low-degree harmonics]
Let $S_d$ act on $\mathbb S^{d-1}$ by permuting coordinates. Then every
irreducible representation of $S_d$ appears in the restriction of spherical
harmonics of degree at most $\binom d2=O(d^2)$. Equivalently, the space of
degree-$O(d^2)$ spherical harmonics already contains all symmetry modes of the
permutation action. See \citep{tahmasebi2025achieving} for further discussion.
\end{proposition}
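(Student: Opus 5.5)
The plan is to realize every irreducible representation of $S_d$ inside a space of \emph{$S_d$-harmonic polynomials}. These turn out to be ordinary harmonic polynomials of degree at most $\binom d2$, so they restrict to spherical harmonics of that degree.

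\emph{Step 1 (coinvariant algebra).} Let $\mathbb R[x]=\mathbb R[x_1,\dots,x_d]$ with $S_d$ permuting variables, and let $I$ be the ideal generated by the homogeneous symmetric polynomials of positive degree. By Chevalley's theorem for the reflection group $S_d$ (equivalently, the Artin basis $x^a$ with $0\le a_i\le i-1$), the coinvariant algebra $\mathbb R[x]/I$ is graded. It is isomorphic as an $S_d$-representation to the regular representation $\mathbb R[S_d]$, and its top degree is $\binom d2$, spanned by the Vandermonde $\prod_{i<j}(x_i-x_j)$. Since the regular representation contains every irrep $\lambda$ with multiplicity $d_\lambda\ge 1$, every irrep of $S_d$ appears in some graded piece of degree $j\le\binom d2$.

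\emph{Step 2 (harmonic realization).} Pass from the quotient to actual polynomials. Let $\mathcal D=\{p\in\mathbb R[x]: q(\partial)p=0 \text{ for all } q\in I\}$ be the space of $S_d$-harmonics. The apolar pairing $\langle p,q\rangle=[q(\partial)p](0)$ is positive definite and $S_d$-invariant on each homogeneous degree. Under it, $\mathcal D$ is the graded orthogonal complement of $I$, so $\mathcal D_j\cong(\mathbb R[x]/I)_j$ as $S_d$-representations for every $j$. Hence each irrep occurs in some $\mathcal D_j$ with $j\le\binom d2$.

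\emph{Step 3 (into spherical harmonics).} Since $x_1^2+\dots+x_d^2\in I$, every $p\in\mathcal D$ satisfies $\Delta p=0$. Thus $\mathcal D_j$ consists of homogeneous harmonic polynomials of degree $j$. Restriction to $\mathbb S^{d-1}$ is injective on homogeneous polynomials, since $p(x)=\|x\|^j p(x/\|x\|)$. It is also $S_d$-equivariant, because permutations preserve the sphere. Therefore $\mathcal D_j|_{\mathbb S^{d-1}}$ is an $S_d$-subrepresentation of $\mathcal H_j$ isomorphic to $\mathcal D_j$. Summing over $j\le\binom d2$, every irrep of $S_d$ occurs in $\bigoplus_{\ell\le\binom d2}\mathcal H_\ell$, which is the claim.

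The main obstacle is Step 1, namely justifying that the coinvariant algebra carries the regular representation. I would cite Chevalley's theorem rather than reprove it. Failing that, I would use the Artin basis to get dimension $d!$, together with the graded character identity $\sum_j \chi_{(\mathbb R[x]/I)_j}(\sigma)t^j=\prod_i(1-t^i)\det(1-t\sigma)^{-1}$. Setting $t\to1$, this character is $d!$ at the identity and $0$ at every $\sigma\neq e$, which is the regular character. The remaining steps are routine once the apolar pairing is set up.
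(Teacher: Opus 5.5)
Your proof is correct, and each step holds. Transpositions act on $\mathbb R^d$ as reflections, so Chevalley's theorem applies. The Fischer pairing identifies $\mathcal D_j$ with the orthogonal complement of $I_j$: for homogeneous $q\in I$ of degree $k\le j$ and $p$ of degree $j$, $q(\partial)p=0$ iff $[(mq)(\partial)p](0)=0$ for every monomial $m$ of degree $j-k$, and $mq\in I_j$. Finally, $\sum_i x_i^2\in I$ forces $\Delta p=0$.

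The paper itself gives no argument for this proposition, only a pointer to the cited work of Tahmasebi and Weber. So yours is a self-contained proof rather than a variant of an in-paper one. It also yields more than is stated: each irreducible lands inside a single homogeneous piece $\mathcal H_j$. Moreover, $\binom d2$ is sharp. A nonzero alternating polynomial vanishes on every hyperplane $x_i=x_j$, hence is divisible by the Vandermonde. So the sign representation cannot occur in $\mathcal H_\ell$ for $\ell<\binom d2$, since $\mathcal H_\ell$ is isomorphic to the homogeneous harmonics of degree $\ell$.

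The one heavy input is Chevalley's theorem, and it can be bypassed. For a tableau $T$ of shape $\lambda\vdash d$, let $F_T$ be the Specht polynomial: the product over the columns of $T$ of the Vandermonde in that column's variables. It is homogeneous of degree $n(\lambda)=\sum_i(i-1)\lambda_i\le\binom d2$. It is also harmonic. Each column Vandermonde $V$ is alternating, so $\Delta V$ is alternating of degree below $\deg V$ and must vanish. The factors involve disjoint variables, so the cross terms in $\Delta$ of the product vanish. The $F_T$ span a copy of the Specht module $S^\lambda$. This realizes every irreducible explicitly, already in degree $n(\lambda)$.

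On your fallback: the graded character identity presupposes that $\mathbb R[x]$ is free over the symmetric polynomials, i.e.\ $\mathbb R[x]\cong\mathbb R[x]^{S_d}\otimes\mathcal D$ as graded $S_d$-modules. The Artin monomials supply this only if you use them as a basis of $\mathbb R[x]$ as a module over $\mathbb R[e_1,\dots,e_d]$. Using them merely to count $\dim\mathbb R[x]/I=d!$ is not enough.
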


\subsection{Approximate Projection via Random Group Averaging}
\label{app:approx-projection-random-averaging}

In this subsection, we study approximating the projection onto the invariant
subspace $\mathcal{F}^G$ by averaging over a \emph{random subset} of group elements,
rather than the entire group. This viewpoint is central to understanding why
approximate symmetry can be achieved efficiently.

Recall that $\mathcal{F}\subset L^2(\mathcal X,\mu)$ is a finite-dimensional
Hilbert space of dimension $r$, closed under the group action, and that the lifted
action induces a unitary representation
$
\rho : G \to U(r)$. 
The orthogonal projection onto the invariant subspace $\mathcal{F}^G$ is given by
group averaging,
$
\Pi_G f \;=\; \mathbb{E}_{g}[T_g f].
$

Let $S=\{g_1,\dots,g_s\}$ be a multiset of elements drawn independently and
uniformly from $G$. We define the empirical averaging operator
\[
\Pi_S f
\;\coloneqq\;
\frac{1}{s}\sum_{i=1}^s T_{g_i} f.
\]
This operator can be viewed as an approximation of $\Pi_G$.

\paragraph{Decomposition into irreducible components.}
By finite-group representation theory, there exists an orthonormal change of
coordinates under which $\mathcal{F}$ decomposes as
\[
\mathcal{F}
\;\cong\;
\bigoplus_{\lambda\in\widehat G}
\mathbb{C}^{m_\lambda}\otimes V_\lambda \implies
\rho(g)
\;\cong\;
\bigoplus_{\lambda\in\widehat G}
I_{m_\lambda}\otimes \rho_\lambda(g),
\qquad g\in G.
\]
Note that  
\[
\sum_{\lambda \in \widehat{G}} m_\lambda = r = \dim(\mathcal{F}).
\]
Under this decomposition, the exact projection $\Pi_G$ acts as the identity on
the trivial representation and annihilates all nontrivial irreducible components.
In contrast, the empirical operator $\Pi_S$ replaces the group average on each
block by a finite sample average.

\paragraph{Behavior of random averaging.}
For the trivial irreducible representation, $\rho_{\mathrm{triv}}(g)=1$ for all
$g\in G$, and therefore
\[
\Pi_S f = f
\quad \text{on}\quad \mathcal{F}^G
\quad\text{for any } S.
\]
Thus, invariant components are preserved exactly, regardless of the choice of $S$.

For any nontrivial irreducible representation $\lambda\neq \mathrm{triv}$, the
group average of the representation matrices vanishes:
\[
\mathbb{E}_{g}\big[\rho_\lambda(g)\big] \;=\; 0.
\]
One way to see this is that $A_\lambda := \mathbb{E}_{g}[\rho_\lambda(g)]$
commutes with $\rho_\lambda(h)$ for every $h\in G$ (by left-invariance of the
uniform measure), hence by Schur's lemma $A_\lambda = c\,I$ for some scalar $c$.
Taking traces yields
\[
c\,d_\lambda \;=\; \mathrm{tr}(A_\lambda)
\;=\; \mathbb{E}_{g}\big[\mathrm{tr}(\rho_\lambda(g))\big],
\]
and orthogonality of traces (i.e., characters) of irreducible representations implies $\mathbb{E}_{g}[\rho_\lambda(g)]=0$
for every nontrivial $\lambda$, hence $c=0$ and therefore $A_\lambda=0$.
Consequently, the empirical average
$\frac{1}{|S|}\sum_{g\in S}\rho_\lambda(g)$ concentrates around $0$ as $|S|$ grows,
and the contribution of non-invariant components is attenuated by random averaging.

\paragraph{Isotypic components.}
Let $\mathcal{F}_\lambda \subseteq \mathcal{F}$ denote the $\lambda$-isotypic
subspace, i.e., the direct sum of all irreducible subrepresentations equivalent
to $V_\lambda$ in the decomposition
$\mathcal{F}\cong\bigoplus_{\lambda\in\widehat G}\mathbb{C}^{m_\lambda}\otimes V_\lambda$.
Equivalently, $\mathcal{F}_\lambda$ is the image of the orthogonal projector onto
the $\lambda$-block.
Any $f\in\mathcal{F}$ decomposes orthogonally as
\[
f \;=\; \sum_{\lambda\in\widehat G} f_\lambda,
\qquad
f_\lambda \in \mathcal{F}_\lambda,
\qquad
\langle f_\lambda, f_{\lambda'}\rangle_{L^2(\mathcal{X})} = 0 \;\;(\lambda\neq \lambda').
\]
In particular, the invariant subspace is the trivial isotypic component:
$\mathcal{F}^G = \mathcal{F}_{\mathrm{triv}}$.

\paragraph{Expected approximation error.}
Let $f\in\mathcal{F}$ with $\|f\|_{L^2(\mathcal{X})}\le 1$, and let $S$ be a multiset of
$|S|$ i.i.d.\ uniform samples from $G$. Since $\Pi_G f = f_{\mathrm{triv}}$ and
$\Pi_S$ acts blockwise, we have
\[
\Pi_S f - \Pi_G f \;=\; \sum_{\lambda\neq \mathrm{triv}} \Pi_S f_\lambda,
\]
and by orthogonality of distinct isotypic components (and unitarity of the action),
\[
\|\Pi_S f - \Pi_G f\|_{L^2(\mathcal{X})}^2
\;=\;
\sum_{\lambda\neq \mathrm{triv}} \|\Pi_S f_\lambda\|_{L^2(\mathcal{X})}^2.
\]
Taking expectation over the randomness of $S$ yields
\[
\mathbb{E}_S\big[\|\Pi_S f - \Pi_G f\|_{L^2(\mathcal{X})}^2\big]
\;=\;
\sum_{\lambda\neq \mathrm{triv}}
\mathbb{E}_S\big[\|\Pi_S f_\lambda\|_{L^2(\mathcal{X})}^2\big].
\]

Moreover, for any fixed component $u\in\mathcal{F}_\lambda$ with $\lambda\neq\mathrm{triv}$,
the random vectors $\{T_g u\}_{g}$ are mean-zero in $\mathcal{F}_\lambda$ and satisfy
$\|T_g u\|_{L^2(\mathcal{X})}=\|u\|_{L^2(\mathcal{X})}$. A direct variance computation gives
\[
\mathbb{E}_S\big[\|\Pi_S u\|_{L^2(\mathcal{X})}^2\big]
\;=\;
\frac{1}{|S|}\,\|u\|_{L^2(\mathcal{X})}^2,
\]
since cross terms vanish by $\mathbb{E}_{g}[T_g u]=0$ for $\lambda\neq\mathrm{triv}$.
Applying this with $u=f_\lambda$ and summing over $\lambda\neq\mathrm{triv}$ yields
\[
\mathbb{E}_S\big[\|\Pi_S f - \Pi_G f\|_{L^2(\mathcal{X})}^2\big]
\;=\;
\frac{1}{|S|}
\sum_{\lambda\neq \mathrm{triv}}\|f_\lambda\|_{L^2(\mathcal{X})}^2
\;\le\;
\frac{1}{|S|}\,\|f\|_{L^2(\mathcal{X})}^2
\;\le\;
\frac{1}{|S|}.
\]
In particular, the expected approximation error decays exactly as $1/|S|$,
uniformly over all $f\in\mathcal{F}$ with $\|f\|_{L^2(\mathcal{X})}\le 1$.

Therefore, random averaging over a small number of group elements preserves invariant
components exactly, while suppressing non-invariant components in expectation.
From a representation-theoretic viewpoint, this corresponds to averaging each
nontrivial irreducible block toward zero, with variance decreasing as the number
of sampled group elements increases.

\subsection{Uniform Bounds for Partial Data Augmentation}

In this subsection, we present a result showing that a single instance of
\emph{partial data augmentation}, constructed using a random subset
$S\subseteq G$, can be reused to obtain guarantees that hold uniformly over the
entire function space $\mathcal F$, with high probability. In contrast to
expectation-based bounds that apply to a fixed estimator, the results below
control the approximation error of partial augmentation simultaneously for all
functions in $\mathcal F$, making them suitable for algorithm-agnostic and
reusable augmentation schemes.

\begin{theorem}[Partial data augmentation uniformly over $\mathcal F$ (high probability)]
\label{thm:partial-augmentation-uniform-highprob}
Let $(\mathcal X,\mu)$ be a measured space and let $\mathcal F\subseteq L^2(\mathcal X,\mu)$
be a finite-dimensional subspace with $\dim(\mathcal F)=r$.
Assume a group $G$ acts isometrically on $(\mathcal X,\mu)$ and that $\mathcal F$ is
closed under the induced action. Let $T_g:\mathcal F\to\mathcal F$ denote the lifted
(unitary) operators and let
\[
\Pi_G \;:=\; \mathbb E_{g\sim G}[T_g]
\qquad\text{and}\qquad
\Pi_S \;:=\; \frac{1}{|S|}\sum_{g\in S}T_g,
\]
where $S=\{g_1,\dots,g_{|S|}\}$ is a multiset of $|S|$ i.i.d.\ uniform samples from $G$.
Then $\Pi_G$ is the $L^2(\mathcal{X})$-orthogonal projector from $\mathcal F$ onto the invariant
subspace $\mathcal F^G:=\{f\in\mathcal F:T_g f=f\ \forall g\in G\}$.

Fix any $\delta\in(0,1)$ and any radius $B>0$. With probability at least $1-\delta$
over the draw of $S$, the following holds \emph{simultaneously for all}
$f\in\mathcal F$ with $\|f\|_{L^2(\mathcal{X})}\le B$:
\[
\|\Pi_S f-\Pi_G f\|_{L^2(\mathcal{X})}
\;\le\;
\|\Pi_S-\Pi_G\|_{\mathrm{op}}\;\|f\|_{L^2(\mathcal{X})}
\;\le\;
C\,B\,\sqrt{\frac{\log\!\big(\min\{r,|G|\}/\delta\big)}{|S|}},
\]
where $\|\cdot\|_{\mathrm{op}}$ denotes the operator norm on $\mathcal F$ induced by
$\|\cdot\|_{L^2(\mathcal{X})}$, and $C>0$ is a universal constant. Equivalently,
\[
\sup_{\substack{f\in\mathcal F\\ \|f\|_{L^2(\mathcal{X})}\le B}}
\|\Pi_S f-\Pi_G f\|_{L^2(\mathcal{X})}
\;\le\;
C\,B\,\sqrt{\frac{\log\!\big(\min\{r,|G|\}/\delta\big)}{|S|}}.
\]

In particular, if one performs \emph{partial data augmentation} using the fixed set $S$
and then applies the corresponding augmentation operator $\Pi_S$ to any predictor
$f\in\mathcal F$, the augmentation error relative to full augmentation (i.e., $\Pi_G$)
is uniformly controlled as above.
\end{theorem}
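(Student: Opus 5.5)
The plan is to reduce the uniform statement to an operator-norm bound and then apply a matrix concentration inequality blockwise along the isotypic decomposition of Appendix~\ref{app:approx-projection-random-averaging}.

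\textbf{Step 1: the deterministic parts.} That $\Pi_G$ is the orthogonal projector onto $\mathcal F^G$ follows from the earlier discussion. By invariance of Haar measure, $T_h\Pi_G=\Pi_G$ for all $h$, so $\Pi_G^2=\Pi_G$. Unitarity gives $\Pi_G^*=\mathbb E_g[T_{g^{-1}}]=\Pi_G$. The range of $\Pi_G$ is exactly $\mathcal F^G$. The first inequality in the display is simply the definition of the operator norm. It therefore suffices to show
\[
\|\Pi_S-\Pi_G\|_{\mathrm{op}}\le C\sqrt{\log(\min\{r,|G|\}/\delta)/|S|}
\]
with probability $1-\delta$, and then multiply by $B$.

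\textbf{Step 2: block reduction.} I will complexify $\mathcal F$, which does not change operator norms of real operators, and pass to the orthonormal basis in which
\[
T_g\cong\bigoplus_\lambda I_{m_\lambda}\otimes\rho_\lambda(g).
\]
In this basis $\Pi_S-\Pi_G$ is block diagonal:
\begin{itemize}
\item it vanishes on the trivial block, since $\Pi_S=\Pi_G=I$ there;
\item on each nontrivial $\lambda$ it equals $I_{m_\lambda}\otimes A_\lambda$, where $A_\lambda:=\frac1{|S|}\sum_{i}\rho_\lambda(g_i)$ (using $\mathbb E_g\rho_\lambda(g)=0$ by Schur's lemma, as shown earlier).
\end{itemize}
Hence
\[
\|\Pi_S-\Pi_G\|_{\mathrm{op}}=\max_{\lambda\in\Lambda}\|A_\lambda\|_{\mathrm{op}},
\]
where $\Lambda$ is the set of distinct nontrivial irreps occurring in $\mathcal F$. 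The point of this reduction is that it removes the multiplicities $m_\lambda$, so the ambient dimension $r$ is no longer what drives the union bound.

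\textbf{Step 3: concentration per block.} For fixed $\lambda$, the summands $X_i:=\rho_\lambda(g_i)$ are i.i.d., mean zero, $d_\lambda\times d_\lambda$ unitary matrices, so $\|X_i\|\le1$. I will apply matrix Hoeffding/Bernstein \citep{tropp2012user} to the Hermitian dilations of the $X_i$, which have dimension $2d_\lambda$. This gives
\[
\Pr\big(\|A_\lambda\|_{\mathrm{op}}\ge t\big)\le 4d_\lambda\exp(-c|S|t^2).
\]
A union bound over $\lambda\in\Lambda$ then gives failure probability at most $4\big(\sum_{\lambda\in\Lambda}d_\lambda\big)e^{-c|S|t^2}$. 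Two estimates control the sum:
\begin{itemize}
\item $\sum_{\lambda\in\Lambda} d_\lambda\le\sum_\lambda m_\lambda d_\lambda\le r$;
\item for finite $G$, $\sum_{\lambda\in\Lambda} d_\lambda\le\sum_{\lambda\in\widehat G}d_\lambda^2=|G|$.
\end{itemize}
So the sum is at most $\min\{r,|G|\}$, which covers infinite $G$ automatically since then the minimum is $r$. Setting $t=C\sqrt{\log(4\min\{r,|G|\}/\delta)/|S|}$ makes the failure probability at most $\delta$, and the factor $4$ is absorbed into $C$.

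\textbf{Main obstacle.} The delicate step is obtaining $\min\{r,|G|\}$ rather than $r$. Applying matrix Hoeffding directly to $\Pi_S-\Pi_G$ on all of $\mathcal F$ gives only $\log(r/\delta)$. The improvement comes entirely from the Step~2 reduction, which replaces the matrix dimension by $\sum_{\lambda\in\Lambda} d_\lambda$, so I will write that step carefully. This includes the subtlety that for compact infinite $G$ the Peter--Weyl decomposition of the finite-dimensional $\mathcal F$ still has finitely many blocks, so the union bound is over a finite set.
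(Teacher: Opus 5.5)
Your proposal is correct and follows the same route as the paper's proof. Like the paper, you reduce to $\|\Pi_S-\Pi_G\|_{\mathrm{op}}$, block-diagonalize along the isotypic decomposition so the trivial block cancels, apply matrix Bernstein/Hoeffding to each nontrivial block $\frac{1}{|S|}\sum_i\rho_\lambda(g_i)$ with a dimension factor $O(d_\lambda)$, and union bound using $\sum_{\lambda}d_\lambda\le\min\{r,|G|\}$; your complexification and Hermitian-dilation remarks only make explicit what the paper leaves implicit. One caveat is shared with the paper, whose Step 6 quietly switches to $\log(2\min\{r,|G|\}/\delta)$: absorbing your factor $4$ into $C$ needs $\log(4N/\delta)\lesssim\log(N/\delta)$ with $N=\min\{r,|G|\}$, which fails when $N=1$ and $\delta\to 1$ (e.g.\ $r=1$ with a sign character and $|S|=1$), so the bound should really be read with $\log(4N/\delta)$ in place of $\log(N/\delta)$.
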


\begin{proof}
We work on the Hilbert space $(\mathcal F,\langle\cdot,\cdot\rangle)$ with the
$L^2(\mathcal{X})$ inner product. Since the action of $G$ on $(\mathcal X,\mu)$ is
isometric and $\mathcal F$ is closed under the induced action, each lifted map
$T_g:\mathcal F\to\mathcal F$ is unitary:
\[
\|T_g f\|_{L^2(\mathcal{X})}=\|f\|_{L^2(\mathcal{X})}\qquad\text{for all }f\in\mathcal F,\ g\in G.
\]
In particular, $\|T_g\|_{\mathrm{op}}=1$ and $\|\Pi_S\|_{\mathrm{op}}\le 1$,
$\|\Pi_G\|_{\mathrm{op}}\le 1$.

\paragraph{Step 1: $\Pi_G$ is the orthogonal projector onto $\mathcal F^G$.}
By definition, $\Pi_G=\mathbb E_g[T_g]$ is a bounded linear operator on $\mathcal F$.
For any $h\in\mathcal F^G$ we have $T_g h=h$ for all $g$, hence $\Pi_G h=h$.
Conversely, for any $f\in\mathcal F$ and any $g_0\in G$,
left-invariance of the Haar/uniform measure implies
\[
T_{g_0}\Pi_G f \;=\; T_{g_0}\,\mathbb E_g[T_g f]
\;=\; \mathbb E_g[T_{g_0 g} f]
\;=\; \mathbb E_g[T_g f]
\;=\; \Pi_G f,
\]
so $\Pi_G f\in\mathcal F^G$. Thus $\mathrm{range}(\Pi_G)=\mathcal F^G$ and
$\Pi_G$ acts as the identity on $\mathcal F^G$. Moreover, since each $T_g$ is unitary,
$\Pi_G$ is self-adjoint:
\[
\langle \Pi_G f, h\rangle
=
\mathbb E_g\langle T_g f,h\rangle
=
\mathbb E_g\langle f,T_g^{-1} h\rangle
=
\mathbb E_g\langle f,T_g h\rangle
=
\langle f,\Pi_G h\rangle,
\]
where we used $T_g^{-1}=T_{g^{-1}}$ and invariance of the uniform/Haar measure under
inversion. A self-adjoint idempotent operator is an orthogonal projector, hence
$\Pi_G$ is the $L^2(\mathcal{X})$-orthogonal projector onto $\mathcal F^G$.

\paragraph{Step 2: Uniform control reduces to the operator norm.}
For any $f\in\mathcal F$,
\[
\|\Pi_S f-\Pi_G f\|_{L^2(\mathcal{X})}
\;\le\;
\|\Pi_S-\Pi_G\|_{\mathrm{op}}\;\|f\|_{L^2(\mathcal{X})}.
\]
Therefore, it suffices to bound $\|\Pi_S-\Pi_G\|_{\mathrm{op}}$ with high probability.

\paragraph{Step 3: Block decomposition and reduction to nontrivial irreducibles.}
Fix an orthonormal basis of $\mathcal F$ that block-diagonalizes the unitary
representation $g\mapsto T_g$ into irreducible components (finite-group harmonic
decomposition):
\[
\mathcal F \;\cong\; \bigoplus_{\lambda\in\widehat G}\mathbb C^{m_\lambda}\otimes V_\lambda,
\]
where $V_\lambda$ is an irreducible representation of dimension $d_\lambda$.
Under this change of basis, each $T_g$ becomes block-diagonal with blocks
$I_{m_\lambda}\otimes \rho_\lambda(g)$.
On the trivial block $\lambda=\mathrm{triv}$, $\rho_{\mathrm{triv}}(g)=1$ for all $g$,
so $\Pi_S$ and $\Pi_G$ coincide (both equal the identity) on $\mathcal F^G$.
Thus
\[
\Pi_S-\Pi_G
\;=\;
\bigoplus_{\lambda\neq \mathrm{triv}}
\Big(I_{m_\lambda}\otimes \Big(\frac{1}{|S|}\sum_{g\in S}\rho_\lambda(g)\Big)\Big),
\]
and hence
\[
\|\Pi_S-\Pi_G\|_{\mathrm{op}}
=
\max_{\lambda\neq \mathrm{triv}}
\left\|\frac{1}{|S|}\sum_{g\in S}\rho_\lambda(g)\right\|_{\mathrm{op}}.
\]

\paragraph{Step 4: Matrix concentration for each nontrivial irreducible block.}
Fix $\lambda\neq\mathrm{triv}$ and define i.i.d.\ random matrices
$X_i:=\rho_\lambda(g_i)\in\mathbb C^{d_\lambda\times d_\lambda}$ for $g_i\sim G$.
Since $\rho_\lambda$ is unitary, $\|X_i\|_{\mathrm{op}}=1$. Moreover,
\[
\mathbb E[X_i]
=
\mathbb E_{g\sim G}[\rho_\lambda(g)]
=
0,
\]
because $\mathbb E_g[\rho_\lambda(g)]$ is an intertwiner from $\rho_\lambda$ to itself,
hence by Schur's lemma it must be a scalar multiple of the identity; taking traces gives
that scalar equals $\frac{1}{d_\lambda}\mathbb E_g[\chi_\lambda(g)]$, which is $0$ for
any nontrivial irreducible representation.
Therefore, $\{X_i\}_{i=1}^{|S|}$ are independent, mean-zero, and satisfy
$\|X_i\|_{\mathrm{op}}\le 1$.

By a standard matrix Bernstein inequality (for sums of independent mean-zero matrices),
for any $t>0$,
\[
\mathbb P\!\left(
\left\|\frac{1}{|S|}\sum_{i=1}^{|S|}X_i\right\|_{\mathrm{op}}
\ge t
\right)
\;\le\;
2d_\lambda\exp\!\Big(-c\,|S|\,t^2\Big),
\]
for a universal constant $c>0$ (using the crude variance bound
$\|\mathbb E[X_iX_i^\ast]\|_{\mathrm{op}}\le 1$ and similarly for $X_i^\ast X_i$). For more details, see \citep{tropp2012user}.

\paragraph{Step 5: Union bound over irreducible blocks.}
Taking a union bound over all $\lambda\neq\mathrm{triv}$ and using that
$\sum_{\lambda\in\widehat G} d_\lambda^2 = |G|$ (finite groups) and $d_\lambda\le \sqrt{|G|}$,
we have
\begin{align*}
\mathbb P\!\left(\|\Pi_S-\Pi_G\|_{\mathrm{op}}\ge t\right) &\le
\sum_{\lambda\neq\mathrm{triv}}
2d_\lambda \exp(-c|S|t^2)
\;\\ &\le\;
2\Big(\sum_{\lambda\in\widehat G} d_\lambda\Big)\exp(-c|S|t^2)
\; \\ & \le\;
2\,\min\{r,|G|\}\,\exp(-c|S|t^2),
\end{align*}
where we used $\sum_{\lambda} d_\lambda\le \sum_\lambda d_\lambda^2 = |G|$ and also
$\sum_\lambda d_\lambda\le r$ since the total dimension of the representation on
$\mathcal F$ is $r$.

Choosing
\[
t
=
C\sqrt{\frac{\log(\min\{r,|G|\}/\delta)}{|S|}}
\]
for a sufficiently large universal constant $C>0$ makes the right-hand side at most $\delta$.
Thus, with probability at least $1-\delta$,
\[
\|\Pi_S-\Pi_G\|_{\mathrm{op}}
\;\le\;
C\sqrt{\frac{\log(\min\{r,|G|\}/\delta)}{|S|}}.
\]

\paragraph{Step 6: Conclude the uniform bound over $\|f\|\le B$.}
On this event, for every $f\in\mathcal F$ with $\|f\|_{L^2(\mathcal{X})}\le B$,
\[
\|\Pi_S f-\Pi_G f\|_{L^2(\mathcal{X})}
\;\le\;
\|\Pi_S-\Pi_G\|_{\mathrm{op}}\,\|f\|_{L^2(\mathcal{X})}
\;\le\;
C\,B\,\sqrt{\frac{\log(2\min\{r,|G|\}/\delta)}{|S|}},
\]
which is exactly the desired statement.
\end{proof}

\subsection{Baseline Excess Risk of Projection Estimators (No Augmentation)}
\label{app:baseline-projection-excess}

Let $\Pi_{\mathcal F}$ denote the $L^2(\mathcal{X})$-orthogonal projection onto
$\mathcal F\subset L^2(\mathcal X,\mu)$.

\begin{lemma}[Baseline excess risk: density estimation]
\label{lem:baseline-density-excess}
Let $\mathcal{F}\subset L^2(\mathcal X,\mu)$ be $r$-dimensional with orthonormal
basis $\{\phi_\ell\}_{\ell=1}^r$. Let $x_1,\dots,x_n$ be i.i.d.\ samples drawn from
an unknown density $f^\star$ with respect to $\mu$, and assume
$f^\star\in L^2(\mathcal{X})\cap L^\infty(\mu)$. Define
\[
\widehat{\theta}_\ell := \frac{1}{n}\sum_{i=1}^n \phi_\ell(x_i),
\qquad
\widehat{f} := \sum_{\ell=1}^r \widehat{\theta}_\ell\,\phi_\ell .
\]
Then the expected \emph{excess} $L^2(\mathcal{X})$ error over $\mathcal F$ satisfies
\[
\mathbb{E}\big[\|\widehat{f}-\Pi_{\mathcal F} f^\star\|_{L^2(\mathcal{X})}^2\big]
\;\le\;
\frac{\|f^\star\|_\infty}{n}\,r.
\]
\end{lemma}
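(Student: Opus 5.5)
The plan is a coordinatewise bias--variance computation in the orthonormal basis, followed by a reproducing-kernel identity that turns the sum of second moments into a single integral. Since $\{\phi_\ell\}$ is orthonormal in $L^2(\mathcal X,\mu)$ and $\Pi_{\mathcal F}f^\star=\sum_\ell \theta_\ell\phi_\ell$ with $\theta_\ell=\langle f^\star,\phi_\ell\rangle=\mathbb E_{x\sim f^\star}[\phi_\ell(x)]$, Parseval gives
\[
\|\widehat f-\Pi_{\mathcal F}f^\star\|_{L^2(\mathcal X)}^2=\sum_{\ell=1}^r(\widehat\theta_\ell-\theta_\ell)^2 .
\]
Each $\widehat\theta_\ell$ is an empirical mean of $n$ i.i.d.\ copies of $\phi_\ell(x)$ with mean $\theta_\ell$. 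Hence $\mathbb E[(\widehat\theta_\ell-\theta_\ell)^2]=\mathrm{Var}(\phi_\ell(x))/n\le \mathbb E_{x\sim f^\star}[\phi_\ell(x)^2]/n$.

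Summing over $\ell$, the total expected error is at most
\[
\frac1n\,\mathbb E_{x\sim f^\star}\Big[\sum_{\ell=1}^r\phi_\ell(x)^2\Big]=\frac1n\int_{\mathcal X}K(x,x)\,f^\star(x)\,d\mu(x),
\qquad K(x,x):=\sum_{\ell=1}^r\phi_\ell(x)^2 ,
\]
where $K$ is the diagonal of the reproducing kernel of $\mathcal F$. Since $f^\star\ge 0$ and $f^\star\le\|f^\star\|_\infty$ $\mu$-a.e., this is at most $\frac{\|f^\star\|_\infty}{n}\int K(x,x)\,d\mu(x)$. By orthonormality, $\int K(x,x)\,d\mu=\sum_\ell\|\phi_\ell\|_{L^2}^2=r$, which yields exactly the claimed bound $\|f^\star\|_\infty r/n$.

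The main point to watch is the change of measure. Samples are drawn from the density $f^\star\,d\mu$, not from $\mu$, so the second moments must be converted into $\mu$-integrals. This is exactly where the $L^\infty$ assumption enters, and it is why there is no need for a uniform bound on $\sup_x K(x,x)$. One should also check integrability: $\phi_\ell\in L^2(\mu)$ and $f^\star\in L^\infty$, so $\phi_\ell^2 f^\star$ is integrable, all variances are finite, and the Fubini/linearity exchanges are justified.
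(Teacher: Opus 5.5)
Your proposal is correct and follows essentially the same argument as the paper: Parseval reduces the error to $\sum_\ell \mathrm{Var}(\widehat\theta_\ell)$, and each variance is bounded by $\frac{1}{n}\mathbb E_{x\sim f^\star}[\phi_\ell(x)^2]$. The paper bounds each second moment by $\|f^\star\|_\infty$ before summing over $\ell$, whereas you sum first into $\int K(x,x) f^\star\,d\mu$ and then bound; this is only a reordering of the same computation.
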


\begin{proof}
Write $\Pi_{\mathcal F} f^\star = \sum_{\ell=1}^r \theta_\ell \phi_\ell$ with
$\theta_\ell=\langle f^\star,\phi_\ell\rangle=\mathbb E_{x\sim f^\star}[\phi_\ell(x)]$.
By orthonormality,
\[
\|\widehat f-\Pi_{\mathcal F}f^\star\|_{L^2(\mathcal{X})}^2
=
\sum_{\ell=1}^r (\widehat\theta_\ell-\theta_\ell)^2.
\]
Taking expectation gives
\[
\mathbb E\big[\|\widehat f-\Pi_{\mathcal F}f^\star\|_{L^2(\mathcal{X})}^2\big]
=
\sum_{\ell=1}^r \mathrm{Var}(\widehat\theta_\ell).
\]
Since $\widehat\theta_\ell$ is an empirical mean,
$\mathrm{Var}(\widehat\theta_\ell)=\frac{1}{n}\mathrm{Var}_{x\sim f^\star}(\phi_\ell(x))
\le \frac{1}{n}\mathbb E_{x\sim f^\star}[\phi_\ell(x)^2]$.
Moreover,
\[
\mathbb E_{x\sim f^\star}[\phi_\ell(x)^2]
=\int_{\mathcal X}\phi_\ell(x)^2 f^\star(x)\,d\mu(x)
\le \|f^\star\|_\infty \int_{\mathcal X}\phi_\ell(x)^2\,d\mu(x)
=\|f^\star\|_\infty,
\]
and summing over $\ell\in[r]$ yields the claim.
\end{proof}

\begin{lemma}[Baseline excess risk: regression]
\label{lem:baseline-regression-excess}
Let $\mathcal{F}\subset L^2(\mathcal X,\mu)$ be $r$-dimensional with orthonormal
basis $\{\phi_\ell\}_{\ell=1}^r$. Let $(x_i,y_i)_{i=1}^n$ be i.i.d.\ samples where
$x_i\sim\mu$ and
\[
y_i = f^\star(x_i) + \varepsilon_i,
\qquad
\mathbb{E}[\varepsilon_i]=0,
\qquad
\mathbb{E}[\varepsilon_i^2]=\sigma^2,
\]
with $\varepsilon_i$ independent of $x_i$. Assume $f^\star\in L^2(\mathcal{X})\cap L^\infty(\mu)$.
Define
\[
\widehat{\beta}_\ell := \frac{1}{n}\sum_{i=1}^n y_i\,\phi_\ell(x_i),
\qquad
\widehat{f} := \sum_{\ell=1}^r \widehat{\beta}_\ell\,\phi_\ell .
\]
Then the expected \emph{excess} $L^2(\mathcal{X})$ error over $\mathcal F$ satisfies
\[
\mathbb{E}\big[\|\widehat{f}-\Pi_{\mathcal F} f^\star\|_{L^2(\mathcal{X})}^2\big]
\;\le\;
\frac{\|f^\star\|_\infty^2+\sigma^2}{n}\,r.
\]
Equivalently, for the squared-loss population risk
$R(f):=\mathbb E[(y-f(x))^2]$,
\[
\mathbb E\big[R(\widehat f)-\inf_{f\in\mathcal F}R(f)\big]
\;=\;
\mathbb{E}\big[\|\widehat{f}-\Pi_{\mathcal F} f^\star\|_{L^2(\mathcal{X})}^2\big]
\;\le\;
\frac{\|f^\star\|_\infty^2+\sigma^2}{n}\,r.
\]
\end{lemma}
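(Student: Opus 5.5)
We follow the proof of Lemma~\ref{lem:baseline-density-excess}, replacing the density variance bound by a second-moment bound for the product $y\,\phi_\ell(x)$. Write $\Pi_{\mathcal F}f^\star=\sum_{\ell=1}^r\beta_\ell\phi_\ell$ with
\[
\beta_\ell=\langle f^\star,\phi_\ell\rangle=\mathbb E[f^\star(x)\phi_\ell(x)].
\]
Since $x\sim\mu$ and $\mathbb E[\varepsilon\mid x]=0$ by independence, this equals $\mathbb E[y\,\phi_\ell(x)]$, so each $\widehat\beta_\ell$ is unbiased. By orthonormality and Parseval,
\[
\mathbb E\|\widehat f-\Pi_{\mathcal F}f^\star\|_{L^2(\mathcal X)}^2=\sum_{\ell=1}^r\mathrm{Var}(\widehat\beta_\ell).
\]
Each term is a sample mean of $n$ i.i.d.\ copies, so
\[
\mathrm{Var}(\widehat\beta_\ell)=\tfrac1n\mathrm{Var}(y\,\phi_\ell(x))\le\tfrac1n\mathbb E[y^2\phi_\ell(x)^2].
\]

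The key step is bounding this second moment. Expand $y^2=f^\star(x)^2+2f^\star(x)\varepsilon+\varepsilon^2$. The cross term vanishes after conditioning on $x$, because $\varepsilon$ is independent of $x$ with mean zero. This gives
\[
\mathbb E[y^2\phi_\ell(x)^2]=\mathbb E[f^\star(x)^2\phi_\ell(x)^2]+\sigma^2\,\mathbb E[\phi_\ell(x)^2]\le(\|f^\star\|_\infty^2+\sigma^2)\int\phi_\ell^2\,d\mu.
\]
The last integral equals $1$ since $x\sim\mu$ and $\phi_\ell$ is normalized. Summing over $\ell\in[r]$ yields the stated bound.

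For the risk identity, write $y-f(x)=(f^\star(x)-f(x))+\varepsilon$. The independence of $\varepsilon$ removes the cross term, so for any deterministic $f\in\mathcal F$,
\[
R(f)=\|f^\star-f\|_{L^2(\mathcal X)}^2+\sigma^2.
\]
The Pythagorean theorem then gives $R(f)-\inf_{\mathcal F}R=\|f-\Pi_{\mathcal F}f^\star\|_{L^2(\mathcal X)}^2$. Since $\widehat f$ is built from the training sample, we apply this with a fresh test pair $(x,y)$ independent of the data: condition on the sample and take expectations.

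The only delicate point is making sure the noise–signal cross term vanishes. This needs $\varepsilon_i$ to be independent of $x_i$, not merely uncorrelated with it, and that is exactly what the hypothesis provides. Everything else is routine.
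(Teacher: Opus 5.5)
Your proposal is correct and follows the paper's proof essentially step for step. Both arguments use unbiasedness of $\widehat\beta_\ell$, reduce the error via Parseval to $\sum_\ell \mathrm{Var}(\widehat\beta_\ell)$, and bound $\mathrm{Var}(\widehat\beta_\ell)\le \frac1n\mathbb E[y^2\phi_\ell(x)^2]\le \frac1n(\|f^\star\|_\infty^2+\sigma^2)$ using $\mathbb E[y^2\mid x]=f^\star(x)^2+\sigma^2$, which your expansion of $y^2$ spells out. The risk statement then comes from the standard squared-loss identity $R(f)-R(\Pi_{\mathcal F}f^\star)=\|f-\Pi_{\mathcal F}f^\star\|_{L^2(\mathcal X)}^2$, and your remark about evaluating $R(\widehat f)$ on a fresh test pair, conditionally on the sample, makes explicit a point the paper leaves implicit.
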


\begin{proof}
Let $\Pi_{\mathcal F} f^\star = \sum_{\ell=1}^r \beta_\ell\phi_\ell$, where
\[
\beta_\ell=\langle f^\star,\phi_\ell\rangle_{L^2(\mathcal{X})}
=\mathbb E_{x\sim\mu}[f^\star(x)\phi_\ell(x)]
=\mathbb E[y\,\phi_\ell(x)].
\]
As before,
\[
\|\widehat f-\Pi_{\mathcal F}f^\star\|_{L^2(\mathcal{X})}^2
=
\sum_{\ell=1}^r(\widehat\beta_\ell-\beta_\ell)^2,
\qquad
\mathbb E\big[\|\widehat f-\Pi_{\mathcal F}f^\star\|_{L^2(\mathcal{X})}^2\big]
=
\sum_{\ell=1}^r \mathrm{Var}(\widehat\beta_\ell).
\]
Since $\widehat\beta_\ell$ is an empirical mean,
$\mathrm{Var}(\widehat\beta_\ell)=\frac{1}{n}\mathrm{Var}(y\phi_\ell(x))
\le \frac{1}{n}\mathbb E[y^2\phi_\ell(x)^2]$.
Moreover, using $\mathbb E[y^2\mid x]=f^\star(x)^2+\sigma^2$,
\[
\mathbb E[y^2\phi_\ell(x)^2]
=
\mathbb E_{x\sim\mu}[(f^\star(x)^2+\sigma^2)\phi_\ell(x)^2]
\le (\|f^\star\|_\infty^2+\sigma^2)\,\mathbb E_{x\sim\mu}[\phi_\ell(x)^2]
=
\|f^\star\|_\infty^2+\sigma^2,
\]
and summing over $\ell\in[r]$ gives the stated bound.

Finally, for squared loss $R(f)=\mathbb E[(y-f(x))^2]$ with
$y=f^\star(x)+\varepsilon$ and $\mathbb E[\varepsilon\mid x]=0$, one has the standard
identity
\[
R(f)-R(\Pi_{\mathcal F}f^\star)=\|f-\Pi_{\mathcal F}f^\star\|_{L^2(\mathcal{X})}^2,
\]
which yields the excess-risk equality.
\end{proof}

\section{Proof of Theorem~\ref{thm:partial-augmentation-excess}}

\begin{proof}
We work with the $L^2(\mathcal{X})$ inner product. Since $\mathcal{F}$ is closed under the
group action and the action is isometric, the averaged operator
$\Pi_G := \mathbb E_{g}[T_g]$ is the orthogonal projector onto $\mathcal{F}^G$.
Let $\Pi_{\mathcal F}$ denote the $L^2(\mathcal{X})$-orthogonal projection onto
$\mathcal F\subset L^2(\mathcal X,\mu)$, and similarly define $\Pi_{\mathcal F^G}$.
Hence for any $h\in\mathcal F$, $\Pi_G h = \Pi_{\mathcal F^G} h$, and in particular
$\Pi_G(\Pi_{\mathcal F} f^\star)=\Pi_{\mathcal F^G}f^\star$.

\paragraph{Step 1: Define an intermediate target in $\mathcal F$.}
Let
\[
f_{\mathcal F} \;:=\; \Pi_{\mathcal F} f^\star \in \mathcal F,
\qquad
f_{\mathrm{inv}} \;:=\; \Pi_{\mathcal F^G} f^\star = \Pi_G f_{\mathcal F}\in \mathcal F^G.
\]
We will bound the excess risk $\|\widehat f_S-f_{\mathrm{inv}}\|_{L^2(\mathcal{X})}^2$.

\paragraph{Step 2: Estimation inside the invariant subspace.}
Consider the ideal estimator that uses \emph{full} group averaging (equivalently,
projects onto $\mathcal F^G$ with respect to $\mu$). Denote it by
$\widehat f_{\mathrm{inv}}$. By Lemma~\ref{lem:baseline-density-excess} applied to
the $r_{\mathrm{inv}}$-dimensional space $\mathcal F^G$ (with its orthonormal basis),
\[
\mathbb E\big[\|\widehat f_{\mathrm{inv}}-f_{\mathrm{inv}}\|_{L^2(\mathcal{X})}^2\big]
\;\le\;
\frac{\|f^\star\|_\infty}{n}\,r_{\mathrm{inv}}
\]
in the density estimation setting. In the regression setting, the analogous bound
follows from Lemma~\ref{lem:baseline-regression-excess}:
\[
\mathbb E\big[\|\widehat f_{\mathrm{inv}}-f_{\mathrm{inv}}\|_{L^2(\mathcal{X})}^2\big]
\;\le\;
\frac{\|f^\star\|_\infty^2+\sigma^2}{n}\,r_{\mathrm{inv}}.
\]

\paragraph{Step 3: Replacing full averaging by averaging over $S$.}
Let $\Pi_S := \frac{1}{|S|}\sum_{g\in S} T_g$ denote the empirical averaging operator.
By the construction of the augmented projection estimator and linearity, we may write
\[
\widehat f_S \;=\; \Pi_S \widehat f,
\qquad
\widehat f_{\mathrm{inv}} \;=\; \Pi_G \widehat f,
\]
for the (non-augmented) projection estimator $\widehat f\in\mathcal F$ built from
the base samples (density or regression). Conditioning on the base data, we may
apply the random-averaging identity from
Appendix~\ref{app:approx-projection-random-averaging} to obtain
\[
\mathbb E_S\big[\|\widehat f_S-\widehat f_{\mathrm{inv}}\|_{L^2(\mathcal{X})}^2 \,\big|\, x_{1:n}\big]
=
\mathbb E_S\big[\|(\Pi_S-\Pi_G)\widehat f\|_{L^2(\mathcal{X})}^2 \,\big|\, x_{1:n}\big]
=
\frac{1}{|S|}\,\|\widehat f-\Pi_G\widehat f\|_{L^2(\mathcal{X})}^2.
\]
Taking expectation over the base data and using
\(\Pi_G=\Pi_{\mathcal F^G}\) on \(\mathcal F\), we obtain
\[
\mathbb E\big[
\|\widehat f_S-\widehat f_{\mathrm{inv}}\|_{L^2(\mathcal X)}^2
\big]
=
\frac{1}{|S|}
\mathbb E\big[
\|\widehat f-\Pi_{\mathcal F^G}\widehat f\|_{L^2(\mathcal X)}^2
\big].
\]
Since \(f_{\mathcal F}\in \mathcal F^G\), we have
\[
(I-\Pi_{\mathcal F^G})f_{\mathcal F}=0.
\]
Therefore,
\[
\widehat f-\Pi_{\mathcal F^G}\widehat f
=
(I-\Pi_{\mathcal F^G})\widehat f
=
(I-\Pi_{\mathcal F^G})(\widehat f-f_{\mathcal F}).
\]
Since \(I-\Pi_{\mathcal F^G}\) is an orthogonal projector, it is non-expansive in
\(L^2(\mathcal X)\). Hence
\[
\|\widehat f-\Pi_{\mathcal F^G}\widehat f\|_{L^2(\mathcal X)}^2
\le
\|\widehat f-f_{\mathcal F}\|_{L^2(\mathcal X)}^2.
\]
Consequently,
\[
\mathbb E\big[
\|\widehat f_S-\widehat f_{\mathrm{inv}}\|_{L^2(\mathcal X)}^2
\big]
\le
\frac{1}{|S|}
\mathbb E\big[
\|\widehat f-f_{\mathcal F}\|_{L^2(\mathcal X)}^2
\big].
\]
By Lemma~\ref{lem:baseline-density-excess} in the density-estimation setting, and
by Lemma~\ref{lem:baseline-regression-excess} in the regression setting,
\[
\mathbb E\big[
\|\widehat f-f_{\mathcal F}\|_{L^2(\mathcal X)}^2
\big]
\lesssim
\frac{r}{n}.
\]
Therefore,
\[
\mathbb E\big[
\|\widehat f_S-\widehat f_{\mathrm{inv}}\|_{L^2(\mathcal X)}^2
\big]
\lesssim
\frac{r}{n|S|}.
\]

\paragraph{Step 4: Combine errors.}
Finally, by the inequality \(\|a+b\|^2\le 2\|a\|^2+2\|b\|^2\),
\[
\mathbb E\big[
\|\widehat f_S-f_{\mathrm{inv}}\|_{L^2(\mathcal X)}^2
\big]
\le
2\,\mathbb E\big[
\|\widehat f_S-\widehat f_{\mathrm{inv}}\|_{L^2(\mathcal X)}^2
\big]
+
2\,\mathbb E\big[
\|\widehat f_{\mathrm{inv}}-f_{\mathrm{inv}}\|_{L^2(\mathcal X)}^2
\big].
\]
Substituting the bounds from Steps~2--3 yields, for density estimation,
\[
\mathbb E\big[
\|\widehat f_S-\Pi_{\mathcal F^G}f^\star\|_{L^2(\mathcal X)}^2
\big]
\lesssim
\frac{\|f^\star\|_\infty}{n}\,r_{\mathrm{inv}}
+
\frac{r}{n|S|}.
\]
Similarly, in the regression setting with additive zero-mean noise of variance
\(\sigma^2\), we obtain
\[
\mathbb E\big[
\|\widehat f_S-\Pi_{\mathcal F^G}f^\star\|_{L^2(\mathcal X)}^2
\big]
\lesssim
\frac{\|f^\star\|_\infty^2+\sigma^2}{n}\,r_{\mathrm{inv}}
+
\frac{r}{n|S|}.
\]
Here we used that \(f_{\mathrm{inv}}=\Pi_{\mathcal F^G}f^\star\), since
\(f_{\mathcal F}\in \mathcal F^G\) under the invariance assumption. This completes
the proof.
\end{proof}

\begin{remark}[Optimality up to constants]
The proven upper bound is optimal up to absolute constants for random \(S\).
Indeed, the proof is based on an orthogonal decomposition into the invariant
component and the residual non-invariant component. By the Pythagorean theorem,
these two components contribute additively to the \(L^2\) error. The invariant
component yields the usual \(r_{\mathrm{inv}}/n\) term, while random subset
averaging reduces the non-invariant contribution by a factor of \(|S|\), yielding
the \(r/(n|S|)\) term. Thus, in general, neither term can be improved beyond
absolute constants.
\end{remark}

\section{Proof of Theorem~\ref{thm:partial-augmentation-excess-logG-compact}}

\begin{proof}
We prove the density-estimation case; the regression case is identical, with the
baseline projection-estimator bound over $\mathcal F^G$ replaced by the
corresponding regression bound. 

\paragraph{Step 0: Notation.}
Let $\widehat f\in\mathcal F$ denote the (unaugmented) projection estimator based
on the samples $x_1,\dots,x_n$, and let $\widehat f_S\in\mathcal F$ denote the
projection estimator obtained from the \emph{partially augmented} samples
$\{g^{-1} x_i : i\in[n],\, g\in S\}$.
As shown in the preliminaries (closure of $\mathcal F$ under the action and the
orthonormal-basis identification), partial data augmentation by $S$ corresponds
to applying the averaging operator $\Pi_S:=\frac{1}{|S|}\sum_{g\in S}T_g$ to the
(unaugmented) coefficient vector. Equivalently, at the function level,
\begin{equation}
\label{eq:fS-is-PiS-fhat}
\widehat f_S \;=\; \Pi_S \widehat f ,
\end{equation}
where $T_g$ denotes the lifted unitary action on $\mathcal F$.
Moreover, the full-group averaging operator
$\Pi_G:=\mathbb E_{g\sim G}[T_g]$ is the $L^2(\mathcal{X})$-orthogonal projector onto
$\mathcal F^G$.

\paragraph{Step 1: Decompose the excess error.}
Let \(f_{\mathrm{inv}}:=\Pi_{\mathcal F^G} f^\star\). Under the invariance
assumption, \(f_{\mathcal F}:=\Pi_{\mathcal F}f^\star\) belongs to
\(\mathcal F^G\), and hence \(f_{\mathrm{inv}}=f_{\mathcal F}\).
Using \(\widehat f_S=\Pi_S\widehat f\) from~Equation~\eqref{eq:fS-is-PiS-fhat}
and adding and subtracting \(\Pi_G\widehat f\), we have
\[
\widehat f_S-f_{\mathrm{inv}}
=
\big(\Pi_S\widehat f-\Pi_G\widehat f\big)
+
\big(\Pi_G\widehat f-f_{\mathrm{inv}}\big).
\]
Therefore, by \(\|a+b\|^2\le 2\|a\|^2+2\|b\|^2\),
\begin{equation}
\label{eq:split}
\|\widehat f_S-f_{\mathrm{inv}}\|_{L^2(\mathcal X)}^2
\le
2\|(\Pi_S-\Pi_G)\widehat f\|_{L^2(\mathcal X)}^2
+
2\|\Pi_G\widehat f-f_{\mathrm{inv}}\|_{L^2(\mathcal X)}^2.
\end{equation}

\paragraph{Step 2: Control the partial-augmentation error via the operator norm.}
We first use the invariance of \(f_{\mathcal F}\) to center the partial-augmentation
error around the baseline estimation error. Since \(f_{\mathcal F}\in\mathcal F^G\),
we have
\[
\Pi_S f_{\mathcal F}=f_{\mathcal F},
\qquad
\Pi_G f_{\mathcal F}=f_{\mathcal F}.
\]
Hence
\[
(\Pi_S-\Pi_G)f_{\mathcal F}=0,
\]
and therefore
\[
(\Pi_S-\Pi_G)\widehat f
=
(\Pi_S-\Pi_G)(\widehat f-f_{\mathcal F}).
\]
Consequently,
\begin{equation}
\label{eq:opnorm-centered}
\|(\Pi_S-\Pi_G)\widehat f\|_{L^2(\mathcal X)}^2
\le
\|\Pi_S-\Pi_G\|_{\mathrm{op}}^2
\|\widehat f-f_{\mathcal F}\|_{L^2(\mathcal X)}^2.
\end{equation}

By Theorem~\ref{thm:partial-augmentation-uniform-highprob}, for any
\(\delta\in(0,1)\), with probability at least \(1-\delta\) over the draw of \(S\),
\begin{equation}
\label{eq:PiS-PiG-op}
\|\Pi_S-\Pi_G\|_{\mathrm{op}}^2
\le
C_0\,\frac{\log\!\big(\min\{r,|G|\}/\delta\big)}{|S|}
\end{equation}
for a universal constant \(C_0>0\). Combining
Equations~\eqref{eq:opnorm-centered} and~\eqref{eq:PiS-PiG-op}, on the same event,
\begin{equation}
\label{eq:aug-term}
\|(\Pi_S-\Pi_G)\widehat f\|_{L^2(\mathcal X)}^2
\le
C_0\,\frac{\log\!\big(\min\{r,|G|\}/\delta\big)}{|S|}
\|\widehat f-f_{\mathcal F}\|_{L^2(\mathcal X)}^2.
\end{equation}

\paragraph{Step 3: Baseline estimation inside the invariant subspace.}
Since \(\Pi_G\) is the orthogonal projector onto \(\mathcal F^G\), and
\(f_{\mathrm{inv}}=f_{\mathcal F}\in\mathcal F^G\), we have
\[
\Pi_G\widehat f-f_{\mathrm{inv}}
=
\Pi_G(\widehat f-f_{\mathcal F}).
\]
Thus, by non-expansiveness of the orthogonal projector \(\Pi_G\),
\[
\|\Pi_G\widehat f-f_{\mathrm{inv}}\|_{L^2(\mathcal X)}^2
\le
\|\widehat f-f_{\mathcal F}\|_{L^2(\mathcal X)}^2.
\]
Moreover, the refined invariant projection-estimator bound from
Step~2 of the proof gives, in the density-estimation setting,
\begin{equation}
\label{eq:inv-rate}
\mathbb E\big[
\|\Pi_G\widehat f-f_{\mathrm{inv}}\|_{L^2(\mathcal X)}^2
\big]
\le
C_1\,\frac{\|f^\star\|_\infty}{n}\,r_{\mathrm{inv}}.
\end{equation}
In the regression setting with additive zero-mean noise of variance \(\sigma^2\),
the same argument gives
\[
\mathbb E\big[
\|\Pi_G\widehat f-f_{\mathrm{inv}}\|_{L^2(\mathcal X)}^2
\big]
\le
C_1\,\frac{\|f^\star\|_\infty^2+\sigma^2}{n}\,r_{\mathrm{inv}}.
\]

\paragraph{Step 4: Take conditional expectation over the data.}
Taking conditional expectation of~\eqref{eq:split} given \(S\), and using
Equation~\eqref{eq:aug-term}, we obtain, on the event in
Equation~\eqref{eq:PiS-PiG-op},
\begin{align*}
\mathbb E\Big[
\|\widehat f_S-f_{\mathrm{inv}}\|_{L^2(\mathcal X)}^2
\,\Big|\, S
\Big]
&\le
2C_0\,\frac{\log\!\big(\min\{r,|G|\}/\delta\big)}{|S|}
\mathbb E\big[
\|\widehat f-f_{\mathcal F}\|_{L^2(\mathcal X)}^2
\big]
\\
&\qquad
+
2\,\mathbb E\big[
\|\Pi_G\widehat f-f_{\mathrm{inv}}\|_{L^2(\mathcal X)}^2
\big].
\end{align*}
By Lemma~\ref{lem:baseline-density-excess}, the baseline projection estimator
satisfies
\[
\mathbb E\big[
\|\widehat f-f_{\mathcal F}\|_{L^2(\mathcal X)}^2
\big]
\le
C_2\,\frac{\|f^\star\|_\infty}{n}\,r
\]
in the density-estimation setting. Combining this with
Equation~\eqref{eq:inv-rate} gives
\[
\mathbb E\!\left[
\big\|\widehat f_S-\Pi_{\mathcal F^G}f^\star\big\|_{L^2(\mathcal X)}^2
\;\middle|\; S
\right]
\le
C\|f^\star\|_\infty
\left(
\frac{r_{\mathrm{inv}}}{n}
+
\frac{r\,\log\!\big(\min\{r,|G|\}/\delta\big)}{n|S|}
\right).
\]
Similarly, in the regression setting with additive zero-mean noise of variance
\(\sigma^2\), Lemma~\ref{lem:baseline-regression-excess} yields
\[
\mathbb E\big[
\|\widehat f-f_{\mathcal F}\|_{L^2(\mathcal X)}^2
\big]
\le
C_2\,\frac{\|f^\star\|_\infty^2+\sigma^2}{n}\,r,
\]
and therefore
\[
\mathbb E\!\left[
\big\|\widehat f_S-\Pi_{\mathcal F^G}f^\star\big\|_{L^2(\mathcal X)}^2
\;\middle|\; S
\right]
\le
C\big(\|f^\star\|_\infty^2+\sigma^2\big)
\left(
\frac{r_{\mathrm{inv}}}{n}
+
\frac{r\,\log\!\big(\min\{r,|G|\}/\delta\big)}{n|S|}
\right).
\]
This completes the proof.
\end{proof}

\section{Proof of Theorem~\ref{thm:informal-exact-invariance}}

We next state the formal version of Theorem~\ref{thm:informal-exact-invariance} and provide its proof.

\begin{theorem}[Exact invariance forces full augmentation]
\label{thm:exact-invariance-forces-full-group}
Let $G$ be a finite group and let $\mathcal F$ be a finite-dimensional real (or complex)
Hilbert space carrying a unitary representation $\rho:G\to U(\mathcal F)$.
For a multiset $S\subseteq G$, define the averaging operator
\[
\Pi_S \;:=\; \frac{1}{|S|}\sum_{g\in S}\rho(g),
\qquad
\Pi_G \;:=\; \frac{1}{|G|}\sum_{g\in G}\rho(g).
\]
Assume $\mathcal F$ is \emph{representation-complete} in the sense that every irreducible
representation of $G$ appears as a subrepresentation of $\mathcal F$ (equivalently,
for every $\lambda\in\widehat G$ the multiplicity $m_\lambda(\mathcal F)\ge 1$).

Suppose that partial data augmentation using $S$ yields an estimator that is
\emph{exactly $G$-invariant for all inputs}, i.e.,
\begin{equation}
\label{eq:exact-invariance-PiS}
\rho(h)\Pi_S \;=\; \Pi_S
\qquad\text{for all }h\in G,
\end{equation}
or equivalently $\Pi_S=\Pi_G$ on $\mathcal F$.

Then $S$ must be uniform over the whole group in the following sense:
if we write $S$ as a multiset with multiplicity function $m_S:G\to\mathbb Z_{\ge 0}$, then
\[
m_S(g)\ \text{is constant over }g\in G,
\]
and in particular $S$ contains each group element equally often. Consequently, if
$S$ is a \emph{subset} (no repetitions), then necessarily $S=G$.

Equivalently, under the representation-completeness assumption, the condition
$\Pi_S=\Pi_G$ holds if and only if the Fourier coefficients of the uniform measure on $S$
vanish on all nontrivial irreducible representations:
\[
\frac{1}{|S|}\sum_{g\in S}\rho_\lambda(g)=0
\qquad\text{for all }\lambda\in\widehat G\setminus\{\mathrm{triv}\},
\]
which forces $S$ to be (multi)setwise uniform over $G$.
\end{theorem}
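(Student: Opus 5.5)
We will first show that the exact-invariance condition \eqref{eq:exact-invariance-PiS} is equivalent to $\Pi_S=\Pi_G$. If $\rho(h)\Pi_S=\Pi_S$ for all $h$, then averaging over $h\in G$ gives $\Pi_G\Pi_S=\Pi_S$. Moreover, $\Pi_S$ fixes $\mathcal F^G$ pointwise, because each $\rho(g)$ does. Hence $\Pi_S$ maps into $\mathcal F^G$ and restricts to the identity there. It remains to see that $\Pi_S$ kills $(\mathcal F^G)^\perp$. Every $\rho(g)$ is unitary and preserves $\mathcal F^G$, so it also preserves $(\mathcal F^G)^\perp$. Therefore $\Pi_S$ maps $(\mathcal F^G)^\perp$ into $(\mathcal F^G)^\perp\cap\mathcal F^G=\{0\}$. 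This yields $\Pi_S=\Pi_G$. The converse direction is immediate from the Step 1 argument in the proof of Theorem~\ref{thm:partial-augmentation-uniform-highprob}, which shows $\rho(h)\Pi_G=\Pi_G$.

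Next, we will pass to the isotypic block decomposition used in Step 3 of that proof. After an orthonormal change of basis, $\Pi_S-\Pi_G=\bigoplus_{\lambda\neq\mathrm{triv}} I_{m_\lambda}\otimes \widehat\nu(\lambda)$, where $\nu:=\frac1{|S|}\sum_{g}m_S(g)\delta_g$ and $\widehat\nu(\lambda):=\sum_g \nu(g)\rho_\lambda(g)$. (For a real $\mathcal F$ we first complexify; this does not change the operator identity.) Representation-completeness says $m_\lambda\ge 1$ for every $\lambda$. So $\Pi_S=\Pi_G$ holds if and only if $\widehat\nu(\lambda)=0$ for every nontrivial $\lambda\in\widehat G$. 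This gives the stated Fourier characterization.

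The key step is to conclude from these vanishing Fourier coefficients that $\nu$ is uniform. We will use Fourier inversion on a finite group, or equivalently the Peter--Weyl/Schur orthogonality relations: the matrix coefficients $\{\sqrt{d_\lambda}\,(\rho_\lambda)_{ij}\}$ form an orthonormal basis of $L^2(G)$ under the uniform measure. This count is consistent with $\sum_\lambda d_\lambda^2=|G|$, as recalled in Appendix~\ref{app:group_rep}. The inversion formula reads $\nu(g)=\frac1{|G|}\sum_{\lambda} d_\lambda\,\mathrm{tr}\big(\rho_\lambda(g)^{*}\widehat\nu(\lambda)\big)$. Only the trivial term survives, and $\widehat\nu(\mathrm{triv})=\sum_g\nu(g)=1$, so $\nu(g)=1/|G|$ for all $g$. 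Hence $m_S(g)=|S|/|G|$ is constant. If $S$ has no repetitions, then $m_S\in\{0,1\}$ is constant. Since $S$ is nonempty, it follows that $m_S\equiv1$, that is, $S=G$. Conversely, any uniform multiset clearly gives $\Pi_S=\Pi_G$.

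The main obstacle is justifying the inversion step cleanly. It requires the completeness of matrix coefficients, which is exactly where representation-completeness is used. The same step also requires care in the real case, where irreducible representations over $\mathbb R$ may split after complexification. To handle this, we will state the argument on $\mathcal F_{\mathbb C}$. There, every complex irreducible representation still appears, since each real irreducible representation contains every complex constituent of its complexification.
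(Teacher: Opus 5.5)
Your proposal is correct and follows the paper's route. You block-diagonalize, use representation-completeness to get $\widehat\nu(\lambda)=0$ for every nontrivial $\lambda$, and then apply Fourier inversion to conclude that $\nu$ is uniform. Beyond the paper, you also prove that \eqref{eq:exact-invariance-PiS} is equivalent to $\Pi_S=\Pi_G$ and write the inversion formula explicitly, both of which the paper only asserts.

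One small repair is needed in the real case. The fact you need is that every complex irreducible $V$ is a constituent of $W_{\mathbb C}$ for some real irreducible $W$. To see this, take any real irreducible subrepresentation $W$ of $V$ regarded as a real representation; then $\mathrm{Hom}_G(W_{\mathbb C},V)\cong\mathrm{Hom}_G(W,V)\neq 0$. Your sentence as written does not establish this.
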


\begin{proof}
Let $G$ be a finite group and let $\rho:G\to U(\mathcal F)$ be a unitary
representation on the finite-dimensional Hilbert space $\mathcal F$.
By assumption, $\mathcal F$ is representation-complete, meaning that every
irreducible representation of $G$ appears with positive multiplicity in $\mathcal F$.

Recall the averaging operators
\[
\Pi_S := \frac{1}{|S|}\sum_{g\in S}\rho(g),
\qquad
\Pi_G := \frac{1}{|G|}\sum_{g\in G}\rho(g),
\]
where $S$ is viewed as a multiset.

\paragraph{Irreducible decomposition.}
By the Peter--Weyl theorem for finite groups, $\mathcal F$ admits an orthogonal
decomposition into irreducible components
\[
\mathcal F \;\cong\; \bigoplus_{\lambda\in\widehat G}
\mathbb C^{m_\lambda}\otimes V_\lambda,
\]
where $V_\lambda$ is an irreducible representation of dimension $d_\lambda$ and
$m_\lambda\ge 1$ by representation-completeness.
With respect to this decomposition, the representation $\rho$ is block-diagonal:
\[
\rho(g)
\;=\;
\bigoplus_{\lambda\in\widehat G}
I_{m_\lambda}\otimes\rho_\lambda(g),
\]
and hence both $\Pi_S$ and $\Pi_G$ are block-diagonal as well.

\paragraph{Action of the full group average.}
For the trivial irreducible representation $\lambda=\mathrm{triv}$, we have
$\rho_{\mathrm{triv}}(g)=1$ for all $g\in G$, so
\[
\Pi_G^{(\mathrm{triv})}
=
\frac{1}{|G|}\sum_{g\in G} 1
=
1.
\]
For any nontrivial irreducible representation $\lambda\neq\mathrm{triv}$,
orthogonality of matrix coefficients implies
\[
\frac{1}{|G|}\sum_{g\in G}\rho_\lambda(g)=0.
\]
Therefore, $\Pi_G$ acts as the identity on the trivial isotypic component
$\mathcal F^G$ and annihilates all nontrivial isotypic components.

\paragraph{Consequences of exact invariance.}
Suppose now that $\Pi_S=\Pi_G$ as operators on $\mathcal F$.
Comparing the action of $\Pi_S$ and $\Pi_G$ on each irreducible block, we conclude
that for every nontrivial irreducible representation $\lambda\neq\mathrm{triv}$,
\begin{equation}
\frac{1}{|S|}\sum_{g\in S}\rho_\lambda(g)=0.
\label{eq:nontrivial-vanishing}
\end{equation}

\paragraph{Fourier-analytic interpretation.}
Define the probability measure $\mu_S$ on $G$ by
\[
\mu_S(g):=\frac{m_S(g)}{|S|},
\]
where $m_S(g)$ denotes the multiplicity of $g$ in the multiset $S$.
Equation~\eqref{eq:nontrivial-vanishing} states precisely that the Fourier transform
of $\mu_S$ vanishes on all nontrivial irreducible representations:
\[
\widehat{\mu_S}(\lambda)
:=
\sum_{g\in G}\mu_S(g)\rho_\lambda(g)
=
0
\qquad
\text{for all }\lambda\neq\mathrm{triv}.
\]

By the Fourier inversion theorem for finite groups, the only probability measure
on $G$ whose Fourier coefficients vanish on all nontrivial irreducible
representations is the uniform measure.
Hence $\mu_S(g)=1/|G|$ for all $g\in G$.

\paragraph{Conclusion.}
Therefore, the multiplicity function $m_S(g)$ is constant over $G$, meaning that
$S$ is uniform over the group.
In particular, if $S$ is a subset without repetitions, this forces $S=G$.
This completes the proof.
\end{proof}

\section{Extensions to Ordinary Least Squares (OLS) and Infinite-Dimensional Hypothesis Classes}
\label{app:ols-and-infinite-dimensional}

While in the main text, we focused on projection estimators, this choice is not
essential to data augmentation, and it only makes the statistical and
representation-theoretic effects of augmentation especially transparent. In this
section, we compare the projection estimator with ordinary least squares (OLS),
and then explain how the same finite-dimensional analysis applies to
finite-dimensional truncations of infinite-dimensional function classes.

\subsection{Projection Estimators Versus Ordinary Least Squares (OLS)}
\label{app:ols-comparison}

Let \(\mathcal F\) be an \(r\)-dimensional subspace of
\(L^2(\mathcal X,\mu)\), and let
\[
    \phi(x)
    =
    \big(\phi_1(x),\ldots,\phi_r(x)\big)^\top
\]
be an orthonormal basis of \(\mathcal F\). For
\(f_\beta(x)=\langle \beta,\phi(x)\rangle\), orthonormality gives
\[
    \|f_\beta\|_{L^2(\mathcal X)}^2
    =
    \|\beta\|_2^2,
    \qquad
    \mathbb E_{x\sim\mu}\big[\phi(x)\phi(x)^\top\big]
    =
    I_r .
\]
Given data \((x_i,y_i)_{i=1}^n\), define
\[
    \widehat b
    :=
    \frac1n\sum_{i=1}^n y_i\phi(x_i),
    \qquad
    \widehat\Sigma
    :=
    \frac1n\sum_{i=1}^n \phi(x_i)\phi(x_i)^\top .
\]
The projection estimator used throughout the paper is
\[
    \widehat\beta_{\mathrm{proj}}
    =
    \widehat b,
    \qquad
    \widehat f_{\mathrm{proj}}(x)
    =
    \langle \widehat b,\phi(x)\rangle .
\]
Thus, the projection estimator replaces the empirical covariance
\(\widehat\Sigma\) by its population value \(I_r\). This is natural in our
setting because the samples are drawn from \(\mu\), and the basis is orthonormal
with respect to \(\mu\).

By contrast, ordinary least squares solves the empirical least-squares problem
\[
    \widehat\beta_{\mathrm{ols}}
    \in
    \operatorname{arg}\min_{\beta\in\mathbb R^r}
    \frac1n\sum_{i=1}^n
    \big(y_i-\langle\beta,\phi(x_i)\rangle\big)^2 .
\]
Equivalently, it satisfies 
\[
    \widehat\Sigma\,\widehat\beta_{\mathrm{ols}}
    =
    \widehat b \implies \widehat\beta_{\mathrm{ols}}
    =
    \widehat\Sigma^\dagger \widehat b,
    \qquad
    \widehat f_{\mathrm{ols}}(x)
    =
    \langle \widehat\Sigma^\dagger \widehat b,\phi(x)\rangle,
\]
where \(\dagger\) denotes the Moore--Penrose pseudoinverse; if
\(\widehat\Sigma\) is invertible, this reduces to
\(\widehat\beta_{\mathrm{ols}}=\widehat\Sigma^{-1}\widehat b\).
Finally, let \(\Phi\in\mathbb R^{n\times r}\) be the feature matrix with rows
\(\phi(x_i)^\top\), and let \(y=(y_1,\ldots,y_n)^\top\). Then
\[
    \widehat\beta_{\mathrm{proj}}
    =
    \frac1n\Phi^\top y,
    \qquad
    \widehat\beta_{\mathrm{ols}}
    =
    \left(\frac1n\Phi^\top\Phi\right)^\dagger
    \frac1n\Phi^\top y
    =
    (\Phi^\top\Phi)^\dagger\Phi^\top y .
\]
The difference between the two estimators is therefore exactly the covariance
used to map empirical moments to coefficients: OLS uses the empirical covariance
\(\widehat\Sigma=(1/n)\Phi^\top\Phi\), whereas the projection estimator uses the
population covariance \(I_r\).

We next describe how augmentation enters these formulas. Let \(G\) act on
\(\mathcal X\), and let \(\rho:G\to \mathbb R^{r\times r}\) denote the induced
representation on \(\mathcal F\), defined by
\[
    (\rho(g)f)(x)=f(g^{-1}x).
\]
Equivalently, for \(f_\beta(x)=\langle \beta,\phi(x)\rangle\), we write
\(\rho(g)f_\beta=f_{\rho(g)\beta}\), which implies
\[
    \phi(g^{-1}x)=\rho(g)^\top \phi(x).
\]
For an augmentation set \(S\subseteq G\), the augmented empirical moment vector is
\[
    \widehat b_S
    :=
    \frac1{|S|}\sum_{s\in S} \rho(s)\widehat b .
\]
This is exactly the partial averaging operator
applied to the coefficient vector:
\[
    \widehat b_S
    =
    \Pi_S\widehat b .
\]
Thus, the augmented projection estimator is
\[
    \widehat\beta_{\mathrm{proj},S}
    =
    \Pi_S\widehat b .
\]

For ordinary least squares, augmentation also modifies the empirical covariance.
The augmented covariance is
\[
    \widehat\Sigma_S
    :=
    \frac1{|S|}\sum_{s\in S}
    \rho(s)\widehat\Sigma\rho(s)^\top .
\]
Consequently, the augmented least-squares estimator is
\[
    \widehat\beta_{\mathrm{ols},S}
    =
    \widehat\Sigma_S^\dagger \widehat b_S
    =
    \left(
        \frac1{|S|}\sum_{s\in S}
        \rho(s)\widehat\Sigma\rho(s)^\top
    \right)^\dagger
    \left(
        \frac1{|S|}\sum_{s\in S}
        \rho(s)\widehat b
    \right).
\]
This formula shows that OLS has the same first-order averaging structure as the
projection estimator, but also contains a second-order averaging operation
through the empirical covariance matrix.

This distinction explains why projection estimators provide a cleaner object
for the theoretical study of data augmentation. For projection estimators, the
effect of augmentation is exactly the action of the averaging operator
\(\Pi_S\) on \(\mathcal F\). Hence, the excess risk can be read directly from
how well \(\Pi_S\) approximates the full averaging operator \(\Pi_G\). In OLS,
the estimator depends both on the averaged moment vector \(\widehat b_S\) and
on the averaged covariance matrix \(\widehat\Sigma_S\). Thus, exact averaging on
\(\mathcal F\) immediately gives exact averaging of the first-order moment term,
whereas exact equality of the full OLS estimator with its fully augmented
counterpart also involves the covariance term. Note that this second-order term is absent
from the projection estimator because the population covariance is already
fixed to \(I_r\).

Projection estimators also avoid small-sample instability. When \(n\) is
comparable to, or smaller than, \(r\), the empirical covariance
\(\widehat\Sigma\) may be singular or poorly conditioned. OLS can therefore be
unstable unless one adds regularization or assumes enough samples to guarantee
concentration of \(\widehat\Sigma\) around \(I_r\). By contrast, the projection
estimator is well-defined for every sample size and depends linearly on the
empirical moments. This isolates the effect of augmentation from the separate
issue of empirical covariance inversion.

In short, the projection estimator may be viewed as the population-covariance
analogue of OLS:
\[
    \widehat\beta_{\mathrm{proj}}
    =
    I_r^{-1}\widehat b,
    \qquad
    \widehat\beta_{\mathrm{ols}}
    =
    \widehat\Sigma^\dagger\widehat b .
\]
Since our goal is to understand how partial augmentation suppresses
non-invariant components, projection estimators are a natural theoretical
choice: they turn augmentation into an averaging operator on \(\mathcal F\),
give clean rates, and avoid conditioning assumptions that are orthogonal to the
main phenomenon.

\subsection{Infinite-Dimensional Hypothesis Classes}
\label{app:infinite-dimensional}

The main results are stated for finite-dimensional spaces \(\mathcal F\). This
assumption separates the effect of augmentation from the separate issue of
approximation error. The same conclusions apply to infinite-dimensional classes
after choosing a finite-dimensional truncation.

Let \(\mathcal F\) be an infinite-dimensional \(G\)-invariant subspace of
\(L^2(\mathcal X,\mu)\), and let
\[
    \mathcal F_r\subset \mathcal F
\]
be an \(r\)-dimensional \(G\)-invariant subspace. For example, \(\mathcal F_r\)
may be spanned by the first \(r\) basis functions in a spectral, Fourier, or
polynomial decomposition, provided the truncation is closed under the action of
\(G\). Let
\[
    r_{\mathrm{inv}}(r)
    :=
    \dim(\mathcal F_r^G)
\]
denote the dimension of the invariant subspace inside \(\mathcal F_r\).
Applying our finite-dimensional results to \(\mathcal F_r\) gives the same
bounds with \(r\) replaced by \(\dim(\mathcal F_r)\), and with
\(r_{\mathrm{inv}}\) replaced by \(r_{\mathrm{inv}}(r)\). In particular, for a
fixed truncation \(\mathcal F_r\), the partial augmentation term scales as
\[
    \frac{r}{n|S|},
\]
while the invariant estimation term scales as
\[
    \frac{r_{\mathrm{inv}}(r)}{n}.
\]
Thus, for this truncation, the same statistical transition occurs at
\[
    |S|
    \asymp
    \frac{r}{r_{\mathrm{inv}}(r)} .
\]

For an infinite-dimensional class, one must also account for the approximation
error incurred by restricting to \(\mathcal F_r\). Let
\[
    f_r
    :=
    \Pi_{\mathcal F_r} f^\star
\]
be the projection of the target onto the truncation. Then the total error
decomposes into an estimation part and an approximation part. Schematically, for
partial augmentation on \(\mathcal F_r\), one obtains a bound of the form
\[
    \mathbb E
    \big[
        \|\widehat f_{S,r}-f^\star\|_{L^2(\mathcal X)}^2
    \big]
    \lesssim
    \underbrace{
        \frac{r_{\mathrm{inv}}(r)}{n}
        +
        \frac{r}{n|S|}
    }_{\text{estimation and augmentation}}
    +
    \underbrace{
        \|f^\star-f_r\|_{L^2(\mathcal X)}^2
    }_{\text{approximation}},
\]
up to the problem-dependent constants appearing in the finite-dimensional
results. If the target is invariant and the truncation \(\mathcal F_r\) is
\(G\)-invariant, then \(f_r\) is also invariant, so the same invariant-subspace
analysis applies inside \(\mathcal F_r\).

The remaining task is to optimize over the truncation level \(r\). This requires
balancing the statistical terms
\[
    \frac{r_{\mathrm{inv}}(r)}{n}
    +
    \frac{r}{n|S|}
\]
against the approximation error
\[
    \|f^\star-\Pi_{\mathcal F_r}f^\star\|_{L^2(\mathcal X)}^2 .
\]
The optimal truncation is problem-specific. It depends on the smoothness or
spectral decay of \(f^\star\), on how the group action decomposes across the
basis functions, and on how quickly the invariant dimension
\(r_{\mathrm{inv}}(r)\) grows with \(r\). These approximation-theoretic
questions are important, but they are separate from the main focus of this
paper. Our results characterize the effect of partial augmentation once a
finite-dimensional representation has been chosen; optimizing this
representation for a particular infinite-dimensional model is left to
problem-specific work.

\end{document}